\newtheorem{theorem}{Strategy}
\newtheorem{lemma}{Lemma}
\newcommand\latinabbrev[1]{
  \peek_meaning:NTF . {
    #1\@}%
  { \peek_catcode:NTF a {
      #1.\@ }%
    {#1.\@}}}
\definecolor{red}{rgb}{.9,.2,.2}
\definecolor{green}{rgb}{.2,.9,.2}
\definecolor{blue}{rgb}{.2,.2,.9}
\def\ie{\latinabbrev{i.e}}
\title{Weakly supervised learning of indoor geometry by dual warping}
\author{Pulak Purkait~~~~~~~~Ujwal Bonde~~~~~~~~Christopher Zach\\
 Toshiba Research Europe, Cambridge, U.K.\\
{\tt\small \{pulak.cv, ujwal.bonde, christopher.m.zach\}@gmail.com}
}
\begin{document}

\maketitle
 
\begin{abstract}
  A major element of depth perception and 3D understanding is the ability to
  predict the 3D layout of a scene and its contained objects for a novel
  pose. Indoor environments are particularly suitable for novel view
  prediction, since the set of objects in such environments is relatively
  restricted. In this work we address the task of 3D prediction especially for
  indoor scenes by leveraging only weak supervision. In the literature 3D
  scene prediction is usually solved via a 3D voxel grid. 
  However, such methods are limited to estimating rather coarse 3D voxel
  grids, since predicting entire voxel spaces has large computational
  costs. 
  Hence, our method operates in image-space rather than in voxel space, and
  the task of 3D estimation essentially becomes a depth image completion
  problem.
  We propose a novel approach to easily generate training data containing
  depth maps with realistic occlusions, and subsequently train a network for
  completing those occluded
  regions. 
  Using multiple publicly available
  dataset~\cite{song2017semantic,Silberman:ECCV12} we benchmark our method
  against existing approaches and are able to obtain superior performance. We
  further demonstrate the flexibility of our method by presenting results for
  new view synthesis of RGB-D images.
\end{abstract}

\section{Introduction}
\label{sec:intro}
Scene completion has drawn a lot of attention recently from the computer vision~\cite{wu2017marrnet}, robotics~\cite{lai2014unsupervised} as well as the neuroscience community~\cite{gennari1989models}. Most of these works are driven by the assumption that 3D scene completion is important for 3D scene understanding which in turn is useful for tasks such as robot navigation. Towards this end, recent works have addressed 3D scene completion by semantic voxel filling~\cite{song2017semantic,dai2017scannet}. However, these approaches are limited as follows: (i) semantic labeling of 3D voxels will generally produce coarse labelings in order to be computationally feasible, and (ii) labeling of voxels in the object interior might be redundant as one is mostly interested in object surfaces. Therefore, we focus on predicting detailed surfaces rather than semantic voxels. Moreover, methods predicting entire voxel grids rely on large labeled datasets that are expensive to create and label~\cite{song2017semantic}. In contrast, our system does not require any additional labeled data and only relies on calibrated depth images which are easy to acquire using existing RGB-D sensors~\cite{shao2013computer}. An instance of the predicted output is displayed in Fig.~\ref{fig:teaser0}. 
\begin{figure}
\centering 
\centering 
{\includegraphics[width=0.46\textwidth]{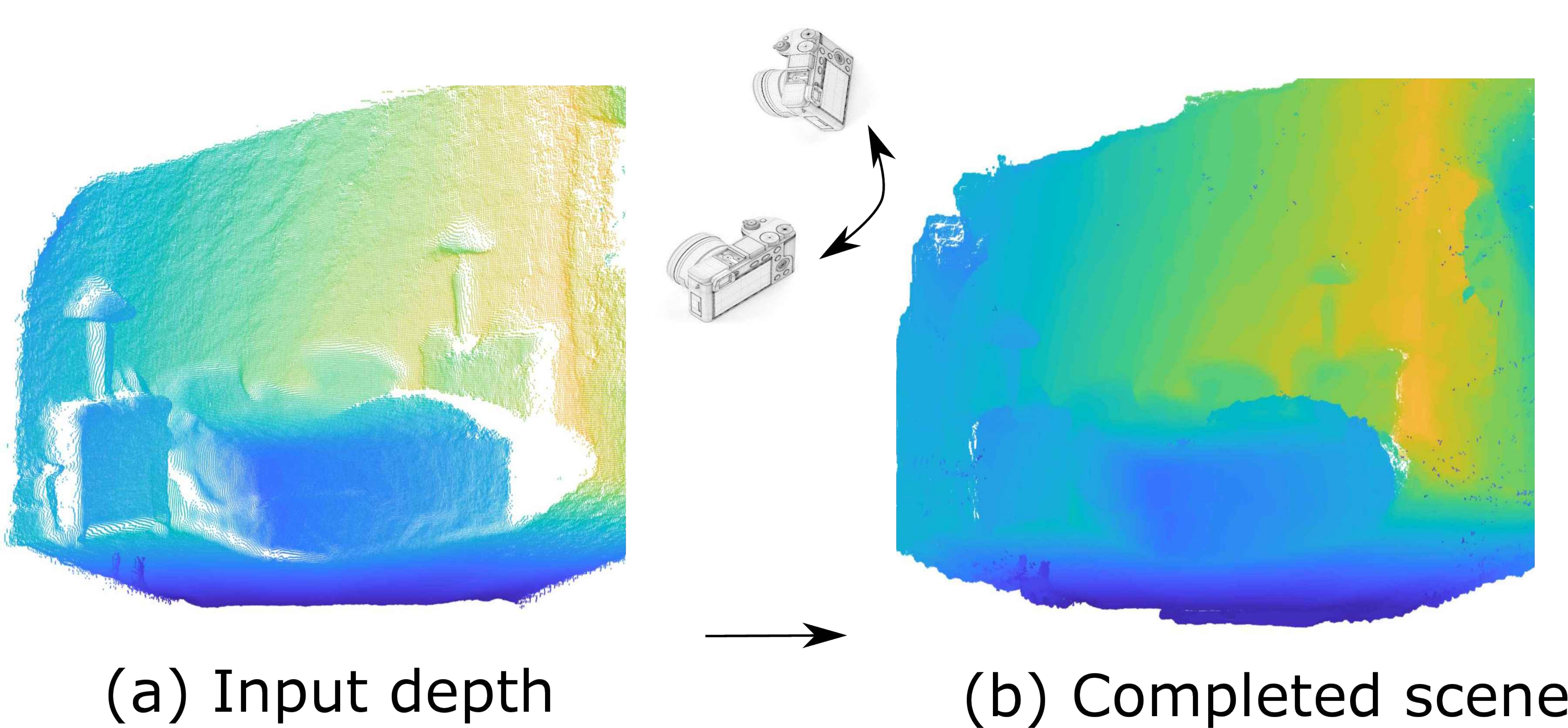}}
\caption{Given an input depth image proposed network can generate multiple depth images which can be further combined for 3D scene completion.}
\label{fig:teaser0}
\end{figure}
%
In summary our contributions are as follows:
\begin{itemize}[leftmargin=1em,itemsep=1pt,parsep=1pt,topsep=1pt]
\item We propose a network architecture to predict depth at arbitrary viewpoints given a single depth image.
\item The network is trained solely using unlabeled depth images without relying on additional supervision signals.
\end{itemize}

\begin{figure*}
\centering 
\includegraphics[width=0.94\textwidth]{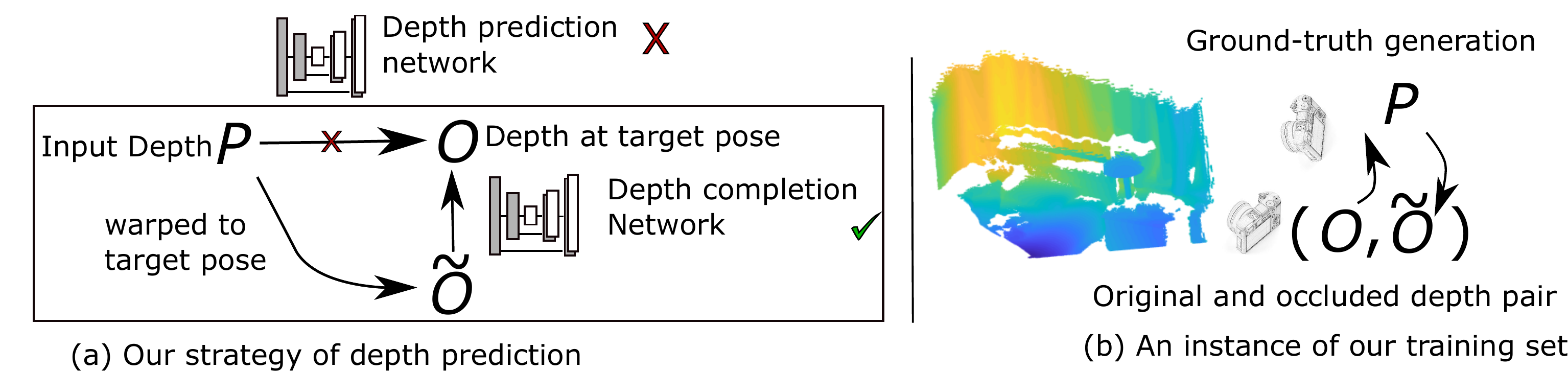} 
\caption{We avoid predicting depth at the target pose, i.e., $P\to$ {\color{red}{depth prediction network}} $\to O$,  by the following depth completion strategy: $P\to$ warp to target pose $\to \tilde{O}\to$ {\color{green}{depth completion network}} $\to O$.}
\label{fig:teaser1}
\end{figure*}

\section{Literature Review}
\label{sec:review}
In computer vision literature the problem of scene completion was addressed by some of the earliest work into human perception understanding~\cite{marr1982computational}, where it is conjectured that human perception relies on its ability to complete scenes. With the availability of cheap sensors~\cite{shao2013computer} and computational resources a renewed interest is seen in this field. In~\cite{song2017semantic,dai2017scancomplete} the authors predict the complete 3D scene from a single depth image using a network to learn shape prior of indoor objects. Given a new scene they predict the voxelised volume with semantic labels for each voxel along with its occupancy probability. Learning these prior requires a large synthetic dataset~\cite{song2017semantic,dai2017scannet} or the need to manually label real world data~\cite{Silberman:ECCV12}, both of which are expensive procedures. In comparison our method solely relies on (synthetically) transforming calibrated depth images to generate training data. Moreover the final prediction in the above-mentioned works is a coarse grid of voxels whereas our system outputs a full-resolution depth image for a desired viewpoint.

Our approach shares similarities with~\cite{zelek2017point} and~\cite{park2017transformation}, which also use displacement fields to solve an image completion/inpainting task. However, these methods are restricted to single or few objects and its unclear how they would generalize to natural scenes. 

We model the scene completion task as an instance of depth image inpainting (e.g.~\cite{iizuka2017globally}). Using low rank approximations is a popular framework for image inpainting~\cite{guillemot2014image}. In~\cite{xue2017depth} the authors extend this approach from RGB to depth images using additional regularization to the gradients. Similar to our work they do not require additional labeled data. However the noise patterns used in image inpainting are often either random and unstructured (e.g.~salt and pepper noise) or structured but artificial (e.g.\ text superimposition). In contrast this work explicitly considers naturally occurring structured missing regions, i.e occlusions generated by warping images based on a given depth map.

The authors of~\cite{pertuz2017region,zhang2018deep} use RGB information to complete a sparse depth image. This is restrictive as we can only complete viewpoints that have corresponding RGB images. On the other hand as we do not rely on RGB images we are able to synthesize arbitrary viewpoints within a reasonably distance from the observed depth map.

\begin{figure*}
\centering 
\begin{tabular}{c@{\hspace{0.05cm}}c@{\hspace{0.05cm}}c@{\hspace{0.05cm}}c}
\includegraphics[width=0.24\textwidth]{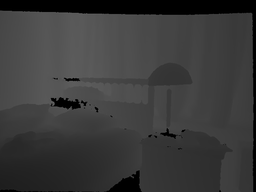} &
\includegraphics[width=0.24\textwidth]{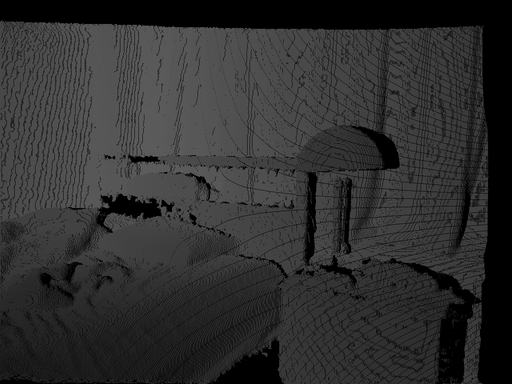} &
\includegraphics[width=0.24\textwidth]{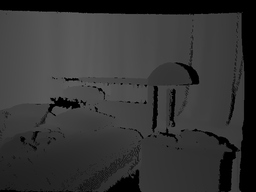} &
\includegraphics[width=0.24\textwidth]{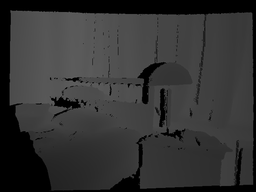} \\
$(a)$  & $(b)$ & $(c)$ & $(d)$
\end{tabular}
\caption{$(a)$ An instance  of a real depth image from NYU Dataset~\cite{Silberman:ECCV12} where pixels with unknown depths (NaN) are marked with zero. $(b)$ Depth after warped forth and back from the original depth to a random location and orientation. $(c)$ Same as $(b)$ but projected 
with upscaling with resolution factor $2$ to reduce the aliasing affect. $(d)$ An additional instance of $(c)$ with opposite view-points. Note that the pairs $\left((c),(a)\right)$ and $\left((d),(a)\right)$ serve as the ground truths for our depth completion network.}
\label{fig:teaser2}
\end{figure*}

\section{Dataset generation via dual warping}
Given a depth image at a particular pose our target is to generate depth views at arbitrary locations. A natural way to accomplish this would be to generate a pair of depth images of the same scene from different view-points and use them as ground truth. However, for this we would require the complete 3D model of a scene from which synthetic pairwise views could be generated, i.e.\ two different depth images of the same scene with the ground truth poses. These 3D models are rarely available or time-consuming to acquire.

In this work we exploit a novel strategy to generate training data solely from given depth images. Let $P$ be a given depth image and $O$ be the depth image at the target pose that we want to estimate.
The estimation of the depth image at a novel view-point $P\to O$ consists of is modelled via two stages: (i) a geometric warping step and (ii) filling of the 
occluded regions. The former is very straight-forward and can be computed efficiently. Thus, we pose the novel view generation 
as a depth completion problem (Fig.~\ref{fig:teaser1}). A convolutional network is trained for this task. 

A strongly supervised approach requires training data consisting of depth map pairs $(O, \tilde O_P)$, where $\tilde O_P$ is the depth map $P$ warped to the pose of $O$, and the task of the DNN is to fill in missing depth values in $\tilde O_P$ to match $O$.
It therefore requires acquisition of multiple (at least two) depth maps for each scene, and a strongly supervised method is consequently not applicable on e.g.\ unordered collections of unrelated depth maps.
Thus, we replace the strongly supervised task by a weakly supervised one, which---as a by-product---turns out to be also less challenging in terms of problem difficulty (see below).
Let $\tilde O$ be the depth map obtained by warping $\tilde P_O$ (i.e.\ $O$ warped to the pose of $P$) back to the pose of $O$, then the training data consists of pairs $(O, \tilde O)$.
Since a given depth map is warped twice we call it ``dual warping'' (see Fig.~\ref{fig:teaser1}).
It only requires independent depth images $\{O\}$ and a method to generate realistic nearby poses (corresponding to the poses of depth maps $\{P\}$, if they were supplied). Thus, we state our first strategy for training data generation below:


\begin{theorem}
The occlusion $O \setminus \tilde{O}$ generated by warping forth and back serves us the ground truth occluded and complete image pair $(\tilde{O},\, O)$. 
\label{th:strategy}
\end{theorem}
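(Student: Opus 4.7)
My plan is to treat this ``strategy'' as a geometric claim about visibility sets and show that the pair $(\tilde O,O)$ exposes the same inpainting task as a hypothetical strongly supervised pair $(\tilde O_P,O)$. First I would fix notation: let $V_O\subset\mathbb{R}^3$ be the set of surface points visible from pose $O$ and $V_P$ the analogue for pose $P$, and let $\pi_O,\pi_P$ be the two pinhole projections. I regard $O$ as a function on $O$'s image domain that assigns each pixel $u$ the depth of the unique closest surface point $x(u)\in V_O$ along the ray from $c_O$ through $u$. The forward warp then reduces to pushing $V_O$ through $\pi_P$ with a $P$-centric z-buffer, and the back warp simply undoes $\pi_O$ on the survivors. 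Since every survivor originates in $V_O$, it returns to its original $O$-pixel, so $\tilde O(u)=O(u)$ whenever $x(u)$ wins the forward z-buffer and is missing otherwise.

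Next I would characterise the z-buffer winners via a ray-occlusion argument. The point $x(u)$ survives iff no other $y\in V_O$ lies strictly in front of $x(u)$ on the $P$-ray through $\pi_P(x(u))$ (and $\pi_P(x(u))$ lies inside $P$'s frame). If $x(u)\in V_P$ then no such $y$ can exist, because any in-front $y$ would itself occlude $x(u)$ from $P$, contradicting $x(u)\in V_P$; hence $\tilde O(u)=O(u)$ in this case. Conversely, if $x(u)\notin V_P$ because $x(u)$ either leaves $P$'s frame or is occluded from $P$ by some $z^\star\in V_P\cap V_O$, then $z^\star$ beats $x(u)$ in the z-buffer (or $x(u)$ never enters it) and $u$ becomes a hole in $\tilde O$. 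By the symmetric ray argument, if a real depth image at pose $P$ were warped to $O$, the pixel $u$ would receive a value precisely when $x(u)\in V_P$. So on the pixel set where $z^\star$ (if present) already belongs to $V_O$, the missing-pixel patterns of $\tilde O$ and $\tilde O_P$ agree exactly, and the training target $O$ is identical. Thus $(\tilde O,O)$ poses the same completion problem as $(\tilde O_P,O)$ on this set.

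The main obstacle is the residual discrepancy: when $x(u)$ is occluded from $P$ by an occluder $z^\star\notin V_O$, a real $P$ would leave $u$ empty whereas dual warping keeps $x(u)$, producing an \emph{under}-occluded $\tilde O$. I would argue that this case is benign because it only removes ambiguous supervision (the network is never trained to hallucinate over a genuinely visible pixel), and that its frequency shrinks with the baseline $\|c_O-c_P\|$ since $z^\star$ must lie simultaneously behind $O$'s visible surface and in front of $x(u)$ from $P$. A brief remark on sampling and z-fighting artefacts, which are what motivate the $2\times$ upscaling illustrated in Fig.~\ref{fig:teaser2}(c), completes the argument that the dual-warped pair $(\tilde O,O)$ is a faithful surrogate for the strongly supervised pair and therefore a valid ground-truth signal for the completion network.
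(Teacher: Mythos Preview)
Your argument is correct and mirrors the paper's own justification, which is not attached to Strategy~\ref{th:strategy} itself but is given as Lemma~1 in Section~\ref{sec:analysis}: both reduce to the visibility chain ``$X\notin\tilde O \Leftrightarrow X\notin O^{[P]} \Rightarrow X\notin P \Rightarrow X\notin P^{[O]}$'', yielding $O\setminus\tilde O\subseteq O\setminus P^{[O]}$ with equality precisely when $P$ sees no occluders absent from $O$---exactly your ``residual discrepancy'' case $z^\star\notin V_O$. Your version is more explicit about the z-buffer mechanics and the benign direction of the discrepancy (under- rather than over-occlusion), but the core idea and decomposition are the same.
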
 
\noindent Note that some parts of the depth map $O$ become occluded during the ``dual warping''. Further, ${O}_y = \tilde{O}_y$ for all pixels $y$ with visible depth $\tilde{O}_y>0$, and e.g.\ $\tilde{O}_y = 0$ for occluded pixels. 
To this end, one can raise a fundamental question: why does one require a complex strategy~\ref{th:strategy} to train a depth completion network.  A straight-forward choice would be following one:
\begin{theorem}
Removing random regions at arbitrary pixel locations in the depth images---the occluded and the original depth image pair $(\tilde{O},\, O)$ 
can serve as the ground truth for depth image completion. 
\label{th:strategy2}
\end{theorem}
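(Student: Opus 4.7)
The plan is to verify that the pair $(\tilde{O},\, O)$ formed by masking random regions of a depth image $O$ constitutes valid supervision for a depth completion network, and then to flag why this apparently easier recipe is expected to be unsatisfactory in practice. First I would fix a random-mask distribution, \eg sample a binary mask $M \in \{0,1\}^{H \times W}$ from some parametric family (random pixels, random rectangles, or random blobs), and define $\tilde{O}_y := O_y \cdot M_y$, so that $\tilde{O}_y = O_y$ on the visible set $\{y : M_y = 1\}$ and $\tilde{O}_y = 0$ on the masked set. This exactly mirrors the zero-encoding convention used in Strategy~\ref{th:strategy}, so a completion network trained on $(\tilde{O}, O)$ pairs receives an input of the same format that it will later encounter.

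Next I would observe that the mapping $O \mapsto \tilde{O}$ is information-preserving on the visible set and destroys information only on the masked set, so any standard training loss (an $L_p$ reconstruction loss, or a masked loss evaluated only on the holes) is well-defined and minimised uniquely at $O$ on visible pixels. At this point the statement is established by construction: the synthesised pairs can literally serve as ground truth for a depth completion network. The argument amounts to a one-line consistency check on the random-mask generator, which I do not expect to be an obstacle.

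The hard part, and in my view the reason Strategy~\ref{th:strategy} is actually needed, lies in the hidden quantifier of ``serves as the ground truth for depth image completion''. The downstream use is to fill occlusions produced by viewpoint warping, and those occlusions are far from random: they concentrate along depth discontinuities, their silhouettes are determined by the warp field, and the depths surrounding them obey strong geometric constraints. A network trained only on random masks sees an input distribution whose holes are statistically decoupled from scene geometry, so the learned conditional $p(O \mid \tilde{O})$ will be mismatched with the one required at test time. To make Strategy~\ref{th:strategy2} genuinely sufficient one would need to exhibit a random-mask family that dominates the warp-induced occlusion distribution, which is not available for any simple parametric choice. Hence my ``proof'' of Strategy~\ref{th:strategy2} would be honest about establishing only the literal statement, and I would expect the paper to follow up with empirical evidence---or a matched-distribution argument---showing that Strategy~\ref{th:strategy} yields the useful version that Strategy~\ref{th:strategy2} does not.
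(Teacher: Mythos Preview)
Your analysis is sound, but there is no proof in the paper to compare against. Despite the theorem-like environment, the paper redefines \texttt{\textbackslash newtheorem\{theorem\}\{Strategy\}}, so this block is labelled ``Strategy~2'' and functions as a description of a baseline data-generation recipe, not as a mathematical claim requiring justification. The paper simply states it and moves on; there is no accompanying \texttt{proof} environment.

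That said, your treatment matches the paper's surrounding discussion almost exactly. You establish the literal statement by construction (define a random mask $M$, set $\tilde O_y = O_y \cdot M_y$, observe the pair is valid supervision for a completion loss), and then you flag the domain-shift objection: random holes are statistically decoupled from scene geometry, whereas warp-induced occlusions hug depth discontinuities. This is precisely the caveat the paper raises immediately after stating Strategy~2 (``there is a clear shift in domains between the training data \ldots\ and test data''), and the paper defers the resolution to the experiment in Sec.~\ref{sec:validation} rather than giving any matched-distribution argument. So your ``proof'' is more than the paper offers, and your anticipated follow-up---empirical evidence that Strategy~1 dominates---is exactly what the paper supplies.
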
 
\noindent However, we argue that there is a clear shift in domains between the training data (where random missing regions are presented to the network) and test data (where missing regions are occurring due to occlusions).
We claim (and experimentally verify) that strategy~\ref{th:strategy} brings the training distribution closer to the test distribution, and its properties are further discussed in Sec.~\ref{sec:analysis}. In Sec.~\ref{sec:validation} we also validate that strategy~\ref{th:strategy2} performs inferior to our dual warping strategy~\ref{th:strategy} for dataset generation. 

\subsection{Warping procedure}
Let $(x, y)$ be the original pixel coordinates of the depth image and $K = [f, 0, \bar{x}; 0, f, \bar{y}; 0, 0, 1]$ be the camera matrix where $f$ is the 
focal length and $(\bar{x}, \bar{y})$ are the principle point of the camera. Further, let $s_{xy}$ be the depth at pixel $(x, y)$. The depth 
at the corresponding pixel $(x^{\prime}, y^{\prime})$ at the relative pose $(R, T)$ can be written as 
\begin{equation}
s_{x^{\prime}y^{\prime}}\mathbf{x^\prime} = K(R K^{-1}s_{xy}\mathbf{x} + T) 
\label{eq:perspective}
\end{equation}
where $\mathbf{x}$ and $\mathbf{x^\prime}$ are the homogeneous pixel co-ordinates at $(x, y)$ and $(x^\prime, y^\prime)$ respectively.
We utilize \eqref{eq:perspective} for forward and backward warping.
Hence, our warping procedure essentially corresponds to rendering of 3D point clouds with a z-buffer test enabled for hidden surface removal.
In the following section we postulate and empirically validate that the above strategy will not introduce any additional occlusion (up to aliasing effects due to point instead of mesh rendering). 
Further, the aliasing effect is addressed by upscaling the depth image by a factor of $2$ before warping to the target pose. Note that the camera matrices are 
modified accordingly (\ie $[2f, 0, 2\bar{x}; 0, 2f, 2\bar{y}; 0, 0, 1]$) while warping with the higher resolutions. 
The effectiveness of our warping strategy is demonstrated by the example shown in Fig.~\ref{fig:teaser2}. 

Each depth image is warped to a random pose and then warped back to the original pose. These random poses are generated on the horizontal plane where the translation and orientation increments are uniformly sampled within the range of $[-1m, 1m]$ and $[-15^\circ, 15^\circ]$.
Our synthesized poses thus emulate essentially the lateral motion of a hand-held depth camera.
The axis of the angular shift is chosen as vertical. Each warping generates a pair of original and occluded image. We generate $25$ different original-occluded image pairs for each depth image. Thus the size of our ground truth dataset is $25\times$ the original depth image dataset.

\begin{figure}
  \begin{center}
	\includegraphics[width=0.46\textwidth]{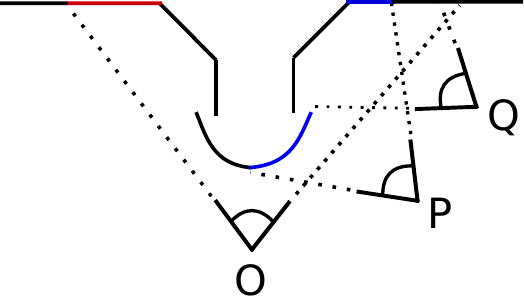} 
  \end{center}
  \caption{An example where $\tilde{O}_P$ (warped forth and back to the pose $P$) and $P^{[O]}$ observe similar objects (marked by blue). 
  In contrast, $\tilde{O}_Q$ (warped forth and back to the pose $Q$) observe more objects than $Q^{[O]}$ (marked by red).} 
  \label{fig:occ}
\end{figure}

\noindent {Note that $\tilde{O}$ and $P^{[O]}$ are depth images observe scene areas visible from both of the poses of $O$ and $P$.  }

\begin{figure*}
\centering 
\includegraphics[width=0.95\textwidth]{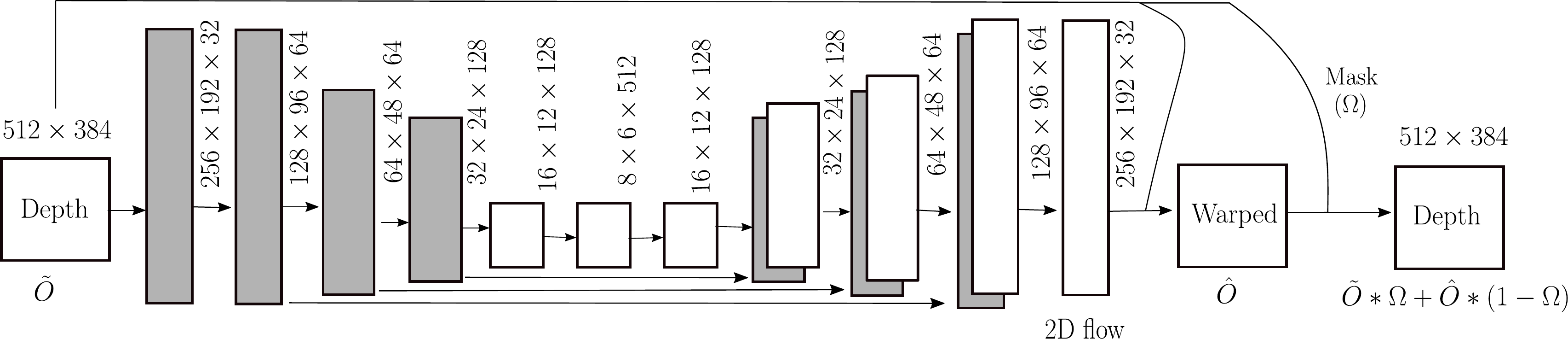} 
\caption{Architecture: The encoder and the decoder is marked by gray and white respectively. The network estimates 2D flow of the source and the 
target pixel locations of similar depth. The depth at the source locations are copied at the target pixel locations with the unknown depth. The depth at the 
known pixel locations are kept unaltered. The network is trained with the training image pairs $(O, \tilde{O})$ and minimize the $\mathcal{L}_{\ell_1}$ loss of the occluded part [see eq.~\eqref{eq:loss}]. No that no direct supervision for the source and target flow is provided and let the network learn the flow directions for which the loss is minimum.}
\label{fig:teaser3}
\end{figure*}

\subsection{Analysis of the generated occlusion patterns}
\label{sec:analysis}
Although in strategy~\ref{th:strategy} we warp twice (to and from a random location), we (essentially) do not introduce any additional occlusions. In fact, the occluded region generated by the strategy is contained in the occluded region generated by warping an actual depth image $P$ at the random pose to the original pose: 
\begin{lemma} 
The occlusion $O \setminus \tilde{O} \subseteq O \setminus P^{[O]}$ where $P^{[O]}$ is the depth image generated by the warping the depth image $P$ to the original pose. 
Further, if the camera at $P$ does not observe any additional objects, then $O \setminus \tilde{O} = O \setminus P^{[O]}$. 
\end{lemma}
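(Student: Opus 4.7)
The plan is to translate both sides of the inclusion into statements about the underlying 3D scene. For each camera $C$, let $V_C$ denote its visibility set, i.e., the closest scene point on every ray emanating from $C$. A valid pixel $y$ of $O$ determines a unique scene point $X = \pi_O^{-1}(y) \in V_O$, and I write $y_P = \pi_P(X)$ for its image under the warp to $P$. I read $P^{[O]}$ in the geometric sense suggested by Fig.~\ref{fig:occ}: $y \in P^{[O]}$ exactly when $P$ observes the same scene point as $O$ at $y$, i.e., $X \in V_P$. The first substantive claim I need is a characterisation of $\tilde{O}$: by tracing the two warps, $y \in \tilde{O}$ iff $X$ survives the forward z-buffer (no other $V_O$ point lies closer to $P$ on the ray $\overline{P y_P}$); the backward z-buffer is automatic, since $X$ is the unique $V_O$ point on the ray $\overline{O y}$.

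For the containment $O \setminus \tilde{O} \subseteq O \setminus P^{[O]}$ I will prove the contrapositive: if $X \in V_P$, then $y \in \tilde{O}$. This is almost immediate: by definition of visibility, $X \in V_P$ forbids any scene point, and hence any $V_O$ point, from lying strictly between $P$ and $X$ on $\overline{P y_P}$, so $X$ passes the forward z-buffer, and the characterisation above gives $y \in \tilde{O}$.

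For the equality under the assumption $V_P \subseteq V_O$, the remaining direction is $X \notin V_P \Rightarrow y \notin \tilde{O}$, and here I will exploit the hypothesis. If $X \notin V_P$, then some scene point $\hat{X} \in V_P$ strictly occludes $X$ from $P$; the assumption upgrades this to $\hat{X} \in V_O$. Therefore $\hat{X}$ competes with $X$ in the forward z-buffer and wins, and then warps back to $\pi_O(\hat{X})$, which equals $y$ only in the degenerate case that $O$, $P$, $X$, $\hat{X}$ are collinear. Outside this measure-zero set, $y \notin \tilde{O}$, completing the equality.

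The main obstacle is not the set-theoretic bookkeeping but the implicit continuous-rendering model. The clean ``backward z-buffer is automatic'' step assumes ideal reprojection, whereas the paper's warp is point-based and thus prone to aliasing that can create spurious occlusions; I would flag this explicitly and tie it back to the $2\times$ upsampling already described in the ``Warping procedure'' subsection. The collinearity exception in the equality step should likewise be acknowledged as a measure-zero degeneracy rather than swept under the rug.
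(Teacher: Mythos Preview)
Your proposal is correct and follows essentially the same logic as the paper's proof: both hinge on the observation that a point $X\in O$ is missing from $\tilde O$ exactly when it fails the forward z-buffer at pose $P$ (the backward step being automatic), and that any $V_O$-occluder of $X$ is in particular a scene occluder, so $X\notin V_P$ and hence $X\notin P^{[O]}$. Your treatment is more explicit---visibility sets, the contrapositive formulation, the $V_P\subseteq V_O$ reading of the hypothesis, and the aliasing/collinearity caveats---but the underlying argument is identical to the paper's.
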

\begin{proof}
  Let us denote the warping step of a depth map $O$ into the pose of $P$
  yielding $O^{[P]}$ by $O\xrightarrow[P]{warp} O^{[P]}$, and let $\tilde O :=
  (O^{[P]})^{[O]}$ be the result of dual warping $O\xrightarrow[P]{warp}
  O^{[P]} \xrightarrow[O]{warp}\tilde{O}$. By construction a 3D point $X$
  visible in $O$ (written as $X \in O$) is not visible in $\tilde O$ iff $X
  \notin O^{[P]}$. Compared to $O^{[P]}$ the true depth map $P$ may contain
  additional surfaces occluding $X$ at the pose of $P$, hence $X \notin
  O^{[P]}$ (or $X\notin\tilde{O}$) implies $X \notin P$ (equivalence holds if
  $P$ has no additional occluders not present in $O^{[P]}$ blocking $X$, see
  Fig.~\ref{fig:occ}).\footnote{The vertical line segments are not visible from $O$.} Together with $X\notin P$ implying $X\notin P^{[O]}$ we
  have that $X\notin \tilde{O}$ is a sufficient condition for $X\notin
  P^{[O]}$, or $O\setminus \tilde O \subseteq O\setminus P^{[O]}$.
\end{proof}

\section{Architecture and loss} 
We follow the U-Net architecture very similar to \cite{ronneberger2015u}. 
Its a feed-forward convolutional network consists of an encoder and a symmetric decoder, where a number of skip connections is introduced 
by concatenating the features from the encoder layer to the corresponding decoder layer.   
The network takes an input depth image of size $512\times384$ and 
passes the input through multiple convolutional and de-convolutional layers (with stride $2$) to predict a 2D displacement field of the same size as the input. 
Here the displacements (similar to pixel-shifts~\cite{yan2018shift}) indicate shifts between the source pixel and target pixel location.
Note that target pixel locations are the pixels with missing depth.
The task of the network is to predict the corresponding source pixel locations (from which the known depth value is subsequently copied) instead of directly hallucinating the depth value at the target pixel location.
The details of the architecture can also be found in Fig.~\ref{fig:teaser3}. 
Note that experimentally we observe (validated in the result section) that displacement estimation network performs better than the direct depth prediction network.
In contrast to the single image depth prediction networks~\cite{eigen2014depth} (RGB to depth), in our case the depth for unknown regions is indirectly estimated by copying from known depth map portions.

We utilize a masked $\ell_1$ loss $\mathcal{L}_{\ell_1}$ in this work. 
The mask $\Omega$ is considered as the pixels with the unknown depths $\Omega := O \setminus \tilde{O}$.
We also incorporate a total variation loss $\mathcal{L}_{tv}$ to ensure smooth depth predictions, and we further leverage a content loss~\cite{johnson2016perceptual}, $\mathcal{L}_{c}$, to preserve structure of the depth image as described below:
\begin{equation}
\mathcal{L}_{tv} = \sum_{y\in \Omega}{\left(\|O_y-\hat{O}_y\|_1 + \lambda \|\nabla \hat{O}_y \|_{1} \right) }  
\end{equation}
\begin{equation}
\mathcal{L}_{c} = \gamma \sum_{y\in \Omega} \|\phi_l{({O})}_y - \phi_{l}{(\tilde{O})}_y\|_1 
\label{eq:loss}
\end{equation}
where $\hat{O}_y$ is the predicted depth at the pixel $y$ and $\phi_l$ are the feature descriptors at the layer $l$. Note that the depth is predicted 
only at the unknown pixels and the feature descriptors in the loss \eqref{eq:loss} is only considered for the last two layers. The network is trained to minimize the sum of the above loss $\mathcal{L}_{\ell_1} = \mathcal{L}_{tv} + \mathcal{L}_{c}$. $\lambda$ and $\gamma$ are chosen as $10^{-3}$ and $10^{-5}$ respectively.

\section{Experiments}

The proposed depth completion network (named as {\color{green}{Depth-Flow-Net}}) is evaluated on the widely used SUNCG~\cite{song2017semantic} and NYU Depth v2~\cite{Silberman:ECCV12} datasets. 
The loss $\mathcal{L}_{\ell_1}$ is minimized using ADAM with a mini-batch of size  $10$. 
The weight decay is set to $10^{-5}$. The network is trained for $100$ epochs with an initial learning rate $0.001$ which 
is gradually decreased by a factor of $10$ after every $10$ epochs. 
The network is trained with Tensorflow on a desktop equipped with a NVIDIA Titan X GPU, 
and evaluated on an Intel CPU of $3.10GHz$. 

\noindent {\bf Baseline Methods}
We compare the proposed network against the following baselines: 
\begin{itemize}[leftmargin=1em,itemsep=1pt,parsep=1pt,topsep=1pt]
\item The straight forward network for  
predicting depth directly (named as {\color{green}{Depth-Net}}) instead of predicting depth-flow (Depth-Flow-Net) at the unknown pixels. Depth-Net is employed as baseline in this work. 
\item Low rank completion ({\color{red}{LR}}) \cite{xue2017depth}: The missing depth values are computed by low rank  matrix completion with low gradient regularization.\footnote{code is available at \href{https://github.com/xuehy/depthInpainting}{https://github.com/xuehy/depthInpainting}} 
\item Highly sparse inpainting ({\color{red}{Semantic}})~\cite{pertuz2017region}: Region-based depth recovery for highly sparse depth maps.\footnote{code is available at 
 \href{https://uk.mathworks.com/matlabcentral/fileexchange/64546-depth-map-inpainting}{https://uk.mathworks.com/fileexchange/64546}} This method requires
 semantic labels of different objects present in the depth image. Fortunately, the datasets used in this work contain 
 semantic labels which have been employed during the evaluation of this baseline. Note that none of the other methods including 
 ours do not require semantic labels. 
 \item PDE-based inpainting ({\color{red}{PDE}})~\cite{shen2002mathematical}: Partial differential equation based anisotropic diffusion model for image inpainting is executed as baseline for RGB inpainting. 
 \item Mumford-Shah inpainting ({\color{red}{MS}})~\cite{esedoglu2002digital}: The traditional image inpainting based on Mumford-Shah-Euler model is also evaluated 
 as baseline.\footnote{code is available at \href{https://uk.mathworks.com/matlabcentral/fileexchange/55326-matlab-codes-for-the-image-inpainting-problem}{https://uk.mathworks.com/fileexchange/55326}} 
 
\end{itemize} 

\subsection{Depth image completion}

SUNCG~\cite{song2017semantic} is a large-scale synthetic dataset contains $45,622$ depth images of different scenes with realistic rooms and furniture layouts. The NYU depth dataset~\cite{Silberman:ECCV12} consists of $1,449$ real depth images of indoor environment of commercial 
and residential buildings. The datasets also consists of semantic object labels which are not utilized in this work. 
In each epoch we select a batch of $2,000$ depth image pairs (original and with occlusions) generated by our augmentation technique. More augmented images are generated by random cropping and 
flipping the images in the left/right direction. 

\paragraph{Quantitative Evaluation}
The proposed network is evaluated for the task of generating new depth views. For this task a set of $100$ testing image pairs of the same scene at different view-points and orientations is generated from SUNCG~\cite{song2017semantic} datasets.\footnote{code is available at \href{https://github.com/shurans/sscnet}{https://github.com/shurans/sscnet}}  
One is considered as the depth at the source pose and the other considered as depth at the target pose. 
The depth image at the source pose is first warped at the target pose and then fed into the depth completion network Depth-Flow-Net. In Table~\ref{tab:depthcomplSUNCG}, we display the mean and median depth prediction error evaluated only at the unknown pixels. 

For NYU Depth v2~\cite{Silberman:ECCV12} datasets, no complete 3D model is available.  
Thus, we rely on the ``dual warping'' technique (strategy~\ref{th:strategy}) to generate test data. In Table~\ref{tab:depthcomplNYU} we observe that existing depth inpainting algorithms~\cite{xue2017depth} and~\cite{pertuz2017region}
perform comparably, whereas the proposed depth prediction method improves the median error significantly.  A detailed description of the runtime can also be found in Table~\ref{tab:runtime}. We observe Depth-Flow-Net is fastest among the depth completion benchmark methods. Note that all the methods including ours are evaluated on a CPU. The estimation of 2D displacements, depth completion, and generation of new view are included in the runtime.  

\begin{table}\setlength{\tabcolsep}{6pt}
\centering 
\caption{Depth completion Comparison : SUNCG datasets~\cite{song2017semantic}}
  \begin{tabular}{p{2.3cm}ccc} \toprule  
 & LR \cite{xue2017depth} & Semantic \cite{pertuz2017region} & Ours \\ \midrule 
 Mean error & $0.34$m & $0.33$m & $\mathbf{0.28}$m \\
 Median error & $0.25$m & $0.23$m & $\mathbf{0.05}$m \\
  \bottomrule
  \end{tabular}
\label{tab:depthcomplSUNCG}
\end{table}

\begin{table}\setlength{\tabcolsep}{6pt}
\centering 
\caption{Depth completion Comparison : NYU datasets~\cite{Silberman:ECCV12}}
  \begin{tabular}{p{2.3cm}ccc} \toprule  
 & LR \cite{xue2017depth} & Semantic \cite{pertuz2017region} & Ours \\ \midrule 
 Mean error & $0.42$m &  $\mathbf{0.37}$m & $0.38$m \\
 Median error & $0.26$m & $0.20$m  & $\mathbf{0.06}$m \\
  \bottomrule
  \end{tabular}
\label{tab:depthcomplNYU}
\end{table}

\paragraph{Qualitative Evaluation}
The network is again used to evaluate for the task of generating new depth views. Separate sets of depth images 
are chosen as test sets. Each depth image of the test set is warped w.r.t.\ a randomly sampled target pose (with position and orientation variations from the range of $[-1m, 1m]$ and $[-15^\circ, 15^\circ]$). The proposed depth completion network 
is then employed for depth completion at the occluded regions. 
The novel view generation results are plotted in Fig.~\ref{fig:results2}. 
Although, direct depth estimation network Depth-Net produces reasonable solutions in some cases, we observe that proposed dispacement-field based Depth-Flow-Net produces more consistent solutions. The estimated displacements are displayed by {\color{green}{green}} lines segments while target pixels are marked with {\color{blue}{blue}} dots.

In order to enhance the quality of completed depth maps, we utilize an ensemble-inspired framework:
after warping the current depth maps to multiple nearby poses, the induced occlusions are completed using Depth-Flow-Net.
The resulting depth maps are warped back to the original pose and are subsequently merged using a pixel-wise median filter (which we use as a efficient surrogate for a more refined approach for depth maps fusion such as~\cite{merrell2007realtime}).
%
Examples for such 3D scene completion can be found in Fig.~\ref{fig:results2}(c) and Fig.~\ref{fig:results2}(e). 
Note that signed distance functions (i.e.\ volumetric fusion) could be applied to obtain smoother surfaces.
However, we leave this for future work. More results can be found in the supplementary material.

\subsection{Validation of dual warping for dataset creation}
\label{sec:validation}
To validate our synthetic dataset generation method (strategy~\ref{th:strategy}), we conduct an experiment with a dataset generated by strategy~\ref{th:strategy2}. For each depth image the missing regions are generated by removing random regions of pixels. 
Up to $20\%$ of all pixels are removed. The size of each removed region is chosen uniformly within the range $[1, 50]$ along both the directions. We train Depth-Net for both the datasets generated by strategy~\ref{th:strategy} 
and~\ref{th:strategy2}. Note that in the current experiment 
Depth-Net is chosen over Depth-Flow-Net to demonstrate that strategy~\ref{th:strategy} does not just favor a displacement-based network, but
enhances the problem itself. The results are displayed in Fig.~\ref{fig:alternativestrategy}. We observe more accurate 
depth prediction with strategy~\ref{th:strategy}. Thus the current experiment validates our argument of minimal domain shift of novel view generation with strategy ~\ref{th:strategy}. 

\begin{figure}
\centering \scriptsize  
\begin{tabular}{c@{\hspace{0.05cm}}c@{\hspace{0.05cm}}c}
\includegraphics[width=0.15\textwidth]{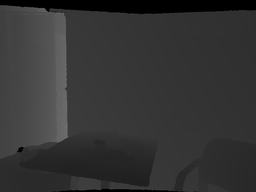} & 
\includegraphics[width=0.15\textwidth]{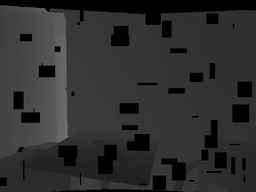} & 
\includegraphics[width=0.15\textwidth]{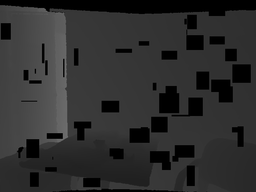} \\  
(a) A sample depth & (b.1) synthetic train data & (b.2) synthetic train data \\ 
\includegraphics[width=0.15\textwidth]{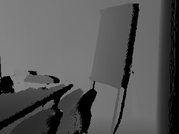} & 
\includegraphics[width=0.15\textwidth]{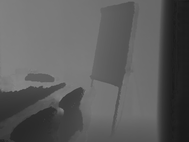} &
\includegraphics[width=0.15\textwidth]{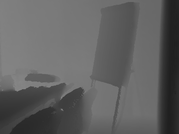} \\ 
(c) Input & (d) trained with Strategy~\ref{th:strategy} & (e) trained with Strategy~\ref{th:strategy2}
\end{tabular}
\caption{Experiment with alternative strategy: Removing random blocks. We observe more ghosting artifacts if the same network (Depth-Net) is trained with the alternative strategy compared to the ``dual warping'' strategy.}
\label{fig:alternativestrategy}  
\end{figure}

\subsection{Limitation / Failure cases}
Despite the generally good performance of Depth-Flow-Net we have encountered failure cases, usually caused by the following:
\begin{itemize}[leftmargin=1em,itemsep=1pt,parsep=1pt,topsep=1pt]
\item In the presence of large occlusions on foreground objects (e.g.~Fig.~\ref{fig:failurecases}(a)) the background depth may incorrecly spill over into the foreground object (Fig.~\ref{fig:failurecases}(b,c)).
\item Very large occlusions (or otherwise regions with missing depth, such as in Fig.~\ref{fig:failurecases}(d)) can lead to displacement vectors that point themselves to missing data (Fig.~\ref{fig:failurecases}(e,f)). In our current approach there is no guarantee that the displacement field always refers to valid depth.
\end{itemize}
Despite the above limitations, proposed Depth-Flow-Net produces satisfactory results in a wide variety of depth images. A number 
of examples are included in the supplementary material. 


\subsection{Novel RGBD image synthesis}
We also exploit our image augmentation strategy for new view RGBD image synthesis given a single RGBD image. We utilize a similar augmentation (strategy~\ref{th:strategy}) to generate 
the ground truth for RGBD image completion. A network similar to Depth-Flow-Net is trained on the augmented RGB-D datasets 
of (original-occluded) pairs. In contrast to depth estimation it takes 4D channels as input and estimate 2D displacements
from source to target regions. Once the network is trained we warp the original view to the target views and complete the missing pixels using the predicted displacement field. 

We conduct similar procedure as before for quantitative and qualitative evaluation. A quantitative comparison can be found in Table~\ref{tab:rgbcompl} and a qualitative comparison is displayed in Fig.~\ref{fig:resultsrgb}.  We observe an improvement 
of PSNR compared to the traditional in-painting algorithms. More results can be found in the supplementary document. 
  
  \begin{figure}
\centering \scriptsize  
\begin{tabular}{c@{\hspace{0.05cm}}c@{\hspace{0.05cm}}c}
\includegraphics[width=0.15\textwidth]{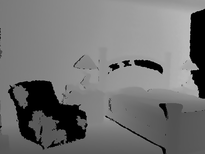} & 
\includegraphics[width=0.15\textwidth]{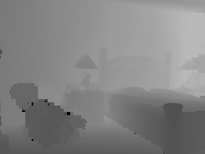} & 
\includegraphics[width=0.15\textwidth]{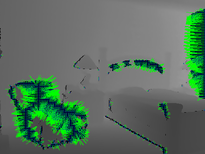} \\  
(a) A sample depth & (b) Completed depth & (c) Estimated flow \\ 
\includegraphics[width=0.15\textwidth]{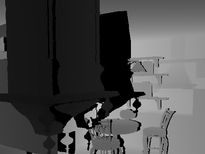} & 
\includegraphics[width=0.15\textwidth]{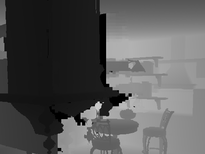} &
\includegraphics[width=0.15\textwidth]{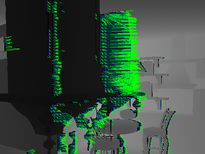} \\ 
(d) A sample depth & (e) Completed depth & (f) Estimated flow \\ 
\end{tabular}
\caption{Failure cases: Two different instances of NYU (a) and SUNCG (d) datasets where proposed Depth-Flow-Net fails due to the 
limitations in the current approach. See text for more details. }
\label{fig:failurecases}  
\end{figure}

\begin{table}\setlength{\tabcolsep}{2pt}
\centering 
\caption{Runtime Comparison: evaluated on CPU}
  \begin{tabular}{p{1.2cm}ccccc} \toprule  
& MS~\cite{esedoglu2002digital} & PDE~\cite{shen2002mathematical} & LR \cite{xue2017depth} & Semantic \cite{pertuz2017region} & Ours \\ \midrule 
 Runtime & $114.8$s & $4.07$s & $73$s & $5.81$s & $\mathbf{0.7}$s \\
  \bottomrule
  \end{tabular}
\label{tab:runtime}
\end{table}

\begin{table}\setlength{\tabcolsep}{10pt} 
\centering 
\caption{Quantitative comparison of RGB image completion}
  \begin{tabular}{p{1.6cm}ccc} \toprule  
 & PDE~\cite{shen2002mathematical} & MS~\cite{esedoglu2002digital} & Ours \\ \midrule 
 PSNR & $24.9$dB & $24.7$dB & $\mathbf{26.08}$dB \\
  \bottomrule
  \end{tabular}
\label{tab:rgbcompl}
\end{table}

\begin{figure*}
\centering \scriptsize   
\begin{tabular}{l@{\hspace{0.35cm}}c@{\hspace{0.05cm}}c@{\hspace{0.05cm}}c@{\hspace{0.05cm}}c@{\hspace{0.05cm}}c@{\hspace{0.05cm}}c}
& \includegraphics[width=0.16\textwidth]{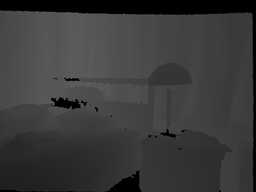}  
& \includegraphics[width=0.16\textwidth]{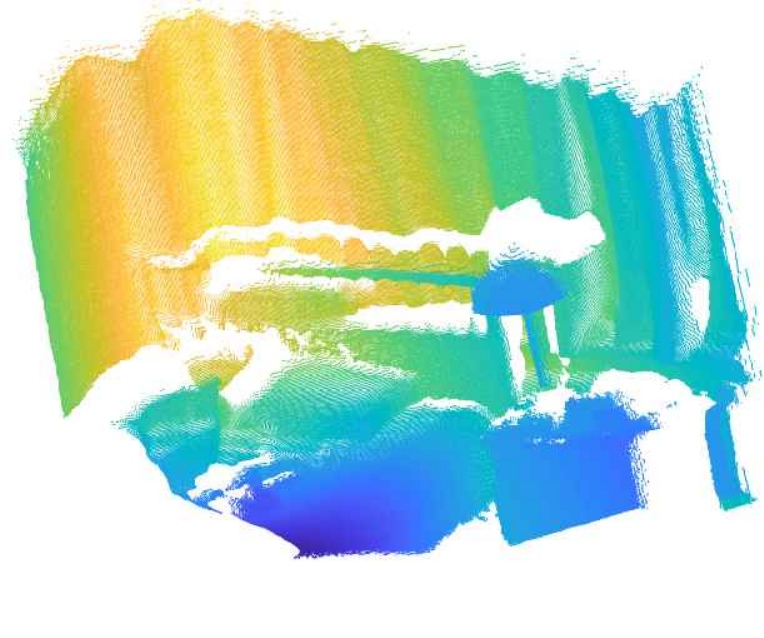}  
& \includegraphics[width=0.16\textwidth]{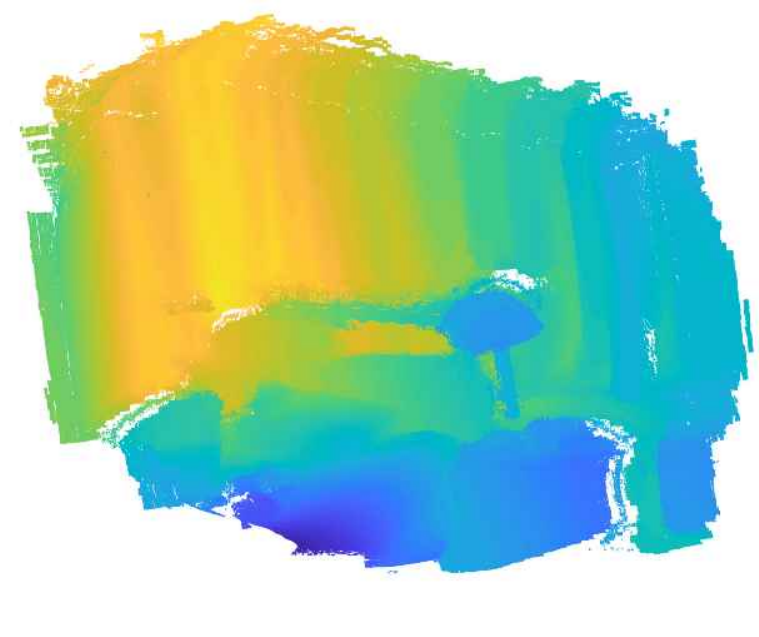}  
 & \includegraphics[width=0.16\textwidth]{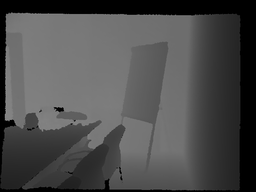} 
 & \includegraphics[width=0.16\textwidth]{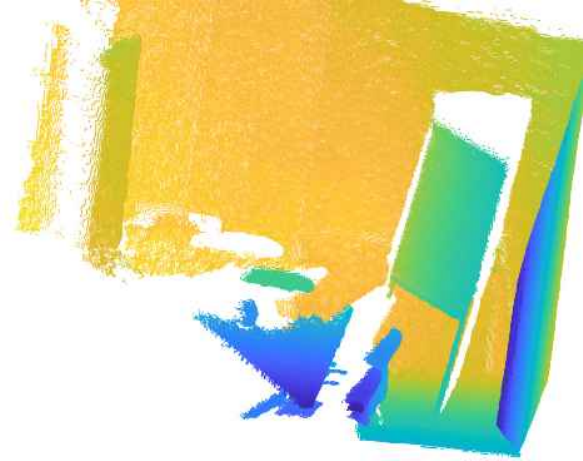}  
& \includegraphics[width=0.16\textwidth]{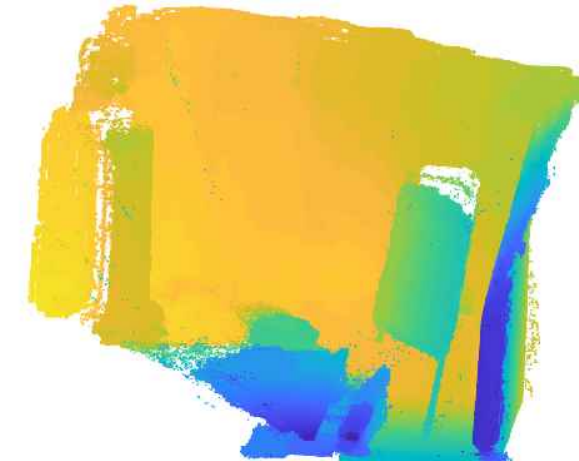}  \\ 
& (a) input depth & (b) 3D plot of (a) & (c) scene completion & (d) input depth & (e) 3D plot of (d) & (f) scene completion
\end{tabular}

\begin{tabular}{l@{\hspace{0.35cm}}c@{\hspace{0.05cm}}c@{\hspace{0.05cm}}c@{\hspace{0.05cm}}c}

\begin{picture}(1,25)
  \put(0,15){\rotatebox{90}{~~~~~\color{blue}{Input}}}
\end{picture} & 
\includegraphics[width=0.23\textwidth]{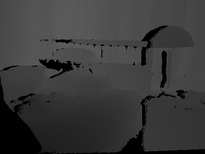} &
\includegraphics[width=0.23\textwidth]{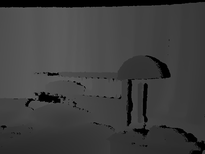} & 
\includegraphics[width=0.23\textwidth]{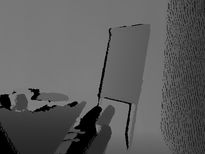} &
\includegraphics[width=0.23\textwidth]{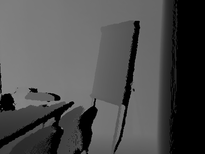} \\ 

\begin{picture}(1,25)
  \put(0,2){\rotatebox{90}{~~~~~\color{red}{Semantic~\cite{pertuz2017region}}}}
\end{picture} & 
\includegraphics[width=0.23\textwidth]{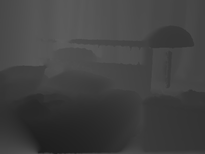} &
\includegraphics[width=0.23\textwidth]{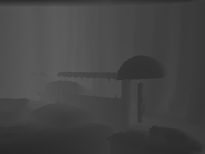} & 
\includegraphics[width=0.23\textwidth]{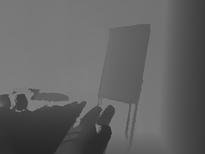} &
\includegraphics[width=0.23\textwidth]{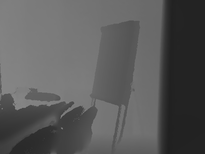} \\ 

\begin{picture}(1,25)
  \put(0,15){\rotatebox{90}{~~~~~\color{red}{LR~\cite{xue2017depth}}}}
\end{picture} & 
\includegraphics[width=0.23\textwidth]{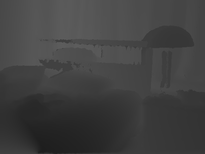} &
\includegraphics[width=0.23\textwidth]{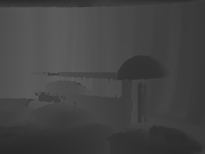} &
\includegraphics[width=0.23\textwidth]{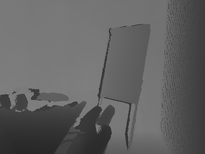} &
\includegraphics[width=0.23\textwidth]{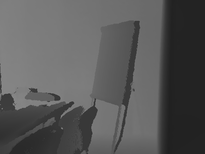}  \\

\begin{picture}(1,25)
  \put(0,6){\rotatebox{90}{~~~~~\color{green}{[Depth-Net]}}}
\end{picture} & 
\includegraphics[width=0.23\textwidth]{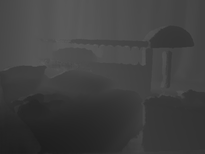} &
\includegraphics[width=0.23\textwidth]{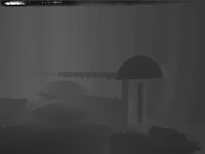} & 
\includegraphics[width=0.23\textwidth]{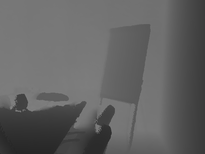} &
\includegraphics[width=0.23\textwidth]{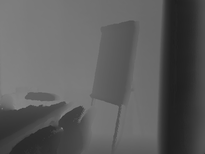} \\

\begin{picture}(1,25)
  \put(0,-2){\rotatebox{90}{~~~~~\color{green}{[Depth-Flow-Net]}}}
\end{picture} & 
\includegraphics[width=0.23\textwidth]{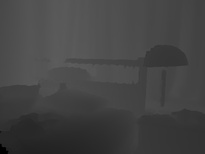} &
\includegraphics[width=0.23\textwidth]{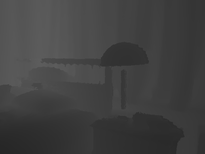} & 
\includegraphics[width=0.23\textwidth]{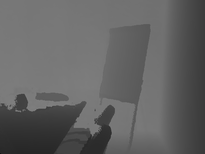} &
\includegraphics[width=0.23\textwidth]{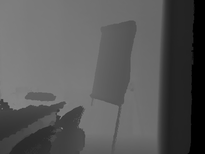} \\ 

\begin{picture}(1,25)
  \put(0,15){\rotatebox{90}{~~~~~\color{green}{[Flow]}}}
\end{picture} & 
\includegraphics[width=0.23\textwidth]{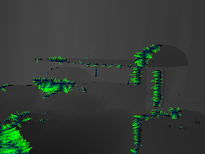} &
\includegraphics[width=0.23\textwidth]{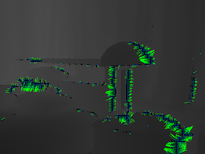} & 
\includegraphics[width=0.23\textwidth]{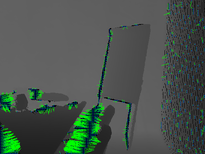} &
\includegraphics[width=0.23\textwidth]{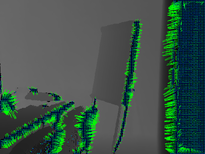} \\ 
& (g) Translation $0.5m$ (left) & (h) Translation $0.5m$ (right) & (i) Translation $0.5m$ (left) & (j) Translation $0.5m$ (right) \\ 
& ~~~~~Orientation $10^\circ$ & ~~~~~Orientation $10^\circ$ & ~~~~~Orientation $10^\circ$  &  ~~~~~Orientation $10^\circ$ 
\end{tabular}
\caption{Qualitative results of depth completion methods at different viewing location and orientation on NYU Depth v2~\cite{Silberman:ECCV12}  dataset. Images are 
first warped to the target pose and then use depth completion methods to predict depth at the occluded regions. 
{\color{green}{Depth-Flow-Net}} produces less artifacts and can even hallucinate handles of the chairs. A complete video 
is shown in the supplementary material.}
\label{fig:results2}
\end{figure*}

\section{Evaluation on SUNCG~\cite{song2017semantic} dataset}

To evaluate the proposed method on  SUNCG~\cite{song2017semantic} datasets, we utilize similar evaluation dataset generation technique as NYU~\cite{Silberman:ECCV12} datasets. We warp the current depth maps to multiple nearby poses and the induced occlusions are completed by proposed Depth-Flow-Net. The results are displayed in Fig.~\ref{fig:results10} and Fig.~\ref{fig:results11} along with the estimated depth flow. Note that the depth-flow is estimated at every pixels but in the figure we only show the flow at the unknown pixels with a regular $4$ pixel interval.  

\section{Conclusion}

In this work we develop a technique to complete 3D scenes indirectly by
filling occluded regions in warped depth (and optionally RGB) images.  Hence,
we are able to avoid a costly volumetric representation and consequently work
in higher-resolution image-space. Our main contribution is the generation of
training data via dual warping, which adds realistic occlusion patterns to
given depth images. Therefore large amounts of training data are easy to
acquire. We also perform a thorough evaluation to demonstrate the
effectiveness of our weakly supervised approach and to show the efficiency of
a proposed depth completion network. Further, the flexibility of the proposed
method is emphasized by an evaluation on RGB-D data. Currently, the proposed
method is limited to generating depth images of relatively nearby poses, which
is a restriction addressed in future research.


\begin{figure*}
\centering \scriptsize   
\begin{tabular}{@{\hspace{0.7cm}}c@{\hspace{0.05cm}}c@{\hspace{0.45cm}}c@{\hspace{0.05cm}}c}
~ {\includegraphics[width=0.22\textwidth]{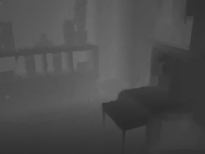}}
   & ~{\includegraphics[width=0.22\textwidth]{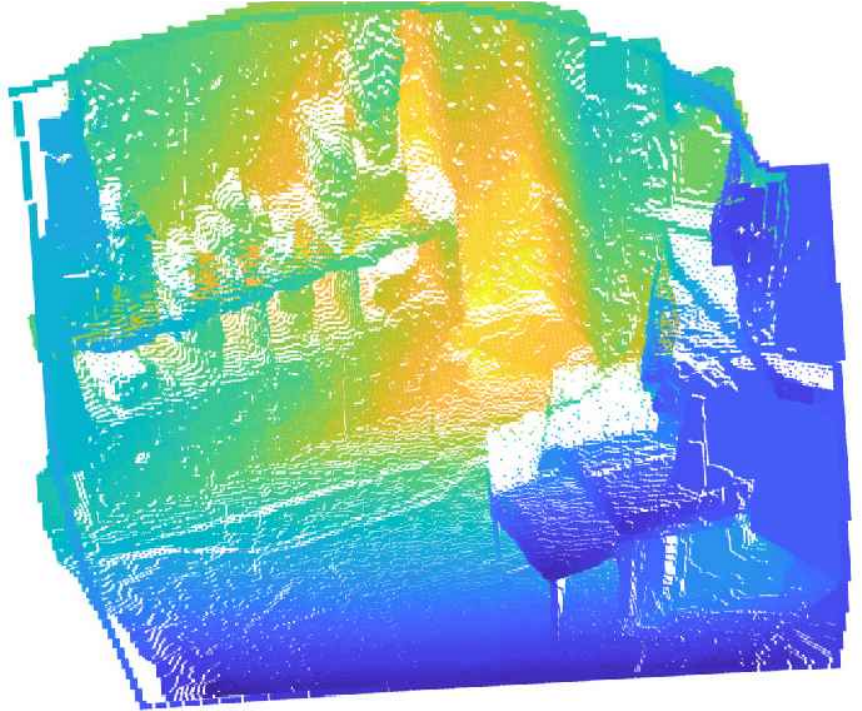}}
   & {\includegraphics[width=0.22\textwidth]{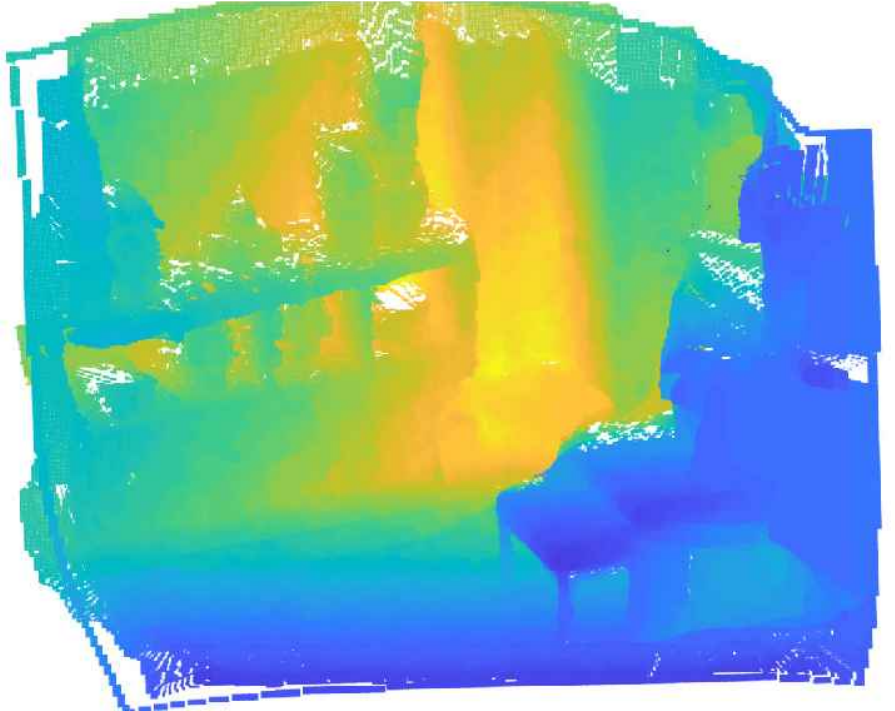}}
   & ~{\includegraphics[width=0.22\textwidth]{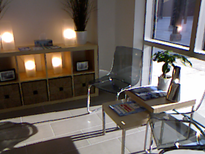}} \\ 
 (a) input depth & (b) 3D plot of (a) & (c) 3D scene completion & (d) Input RGB 
\end{tabular} 
\begin{tabular}{l@{\hspace{0.35cm}}c@{\hspace{0.05cm}}c@{\hspace{0.05cm}}@{\hspace{0.35cm}}|l@{\hspace{0.35cm}}c@{\hspace{0.05cm}}c}

\begin{picture}(1,25)
  \put(0,15){\rotatebox{90}{~~~~~\color{blue}{Input}}}
\end{picture} & 
\includegraphics[width=0.22\textwidth]{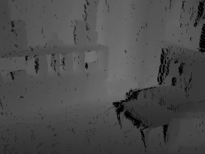} &
\includegraphics[width=0.22\textwidth]{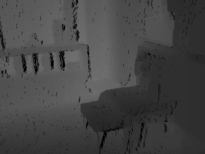} &
\begin{picture}(1,25)
  \put(0,15){\rotatebox{90}{~~~~~\color{blue}{Input}}}
\end{picture} & 
\includegraphics[width=0.22\textwidth]{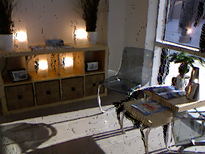} &
\includegraphics[width=0.22\textwidth]{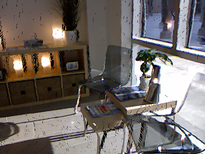} \\ 

\begin{picture}(1,25)
  \put(0,15){\rotatebox{90}{~~~~~\color{red}{LR~\cite{xue2017depth}}}}
\end{picture} & 
\includegraphics[width=0.22\textwidth]{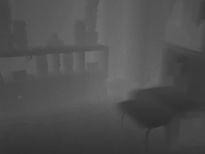} &
\includegraphics[width=0.22\textwidth]{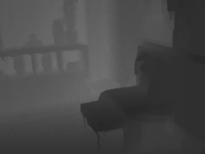} &
\begin{picture}(1,25)
  \put(0,15){\rotatebox{90}{~~~~~\color{red}{MS~\cite{esedoglu2002digital}}}}
\end{picture} & 
\includegraphics[width=0.22\textwidth]{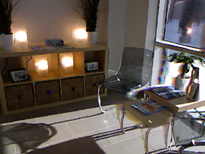} &
\includegraphics[width=0.22\textwidth]{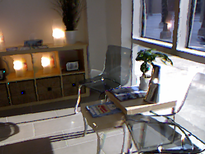} \\ 

\begin{picture}(1,25)
  \put(0,10){\rotatebox{90}{~~~~~\color{green}{[Depth-Net]}}}
\end{picture} & 
\includegraphics[width=0.22\textwidth]{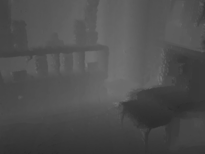} &
\includegraphics[width=0.22\textwidth]{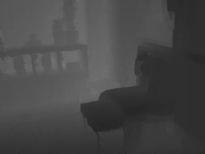} &
\begin{picture}(1,25)
  \put(0,10){\rotatebox{90}{~~~~~\color{green}{[Depth-Net]}}}
\end{picture} & 
\includegraphics[width=0.22\textwidth]{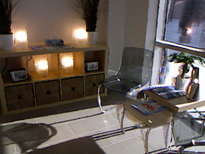} &
\includegraphics[width=0.22\textwidth]{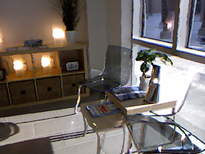} \\ 

\begin{picture}(1,25)
  \put(0,-2){\rotatebox{90}{~~~~~\color{green}{[Depth-Flow-Net]}}}
\end{picture} & 
\includegraphics[width=0.22\textwidth]{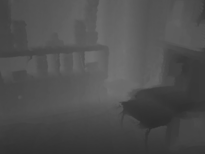} &
\includegraphics[width=0.22\textwidth]{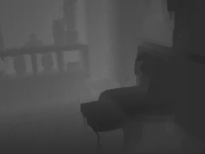} &
\begin{picture}(1,25)
  \put(0,-2){\rotatebox{90}{~~~~~\color{green}{[Depth-Flow-Net]}}}
\end{picture} & 
\includegraphics[width=0.22\textwidth]{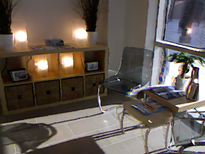} &
\includegraphics[width=0.22\textwidth]{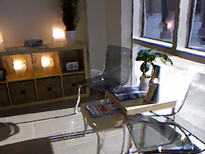} \\

\begin{picture}(1,25)
  \put(0,15){\rotatebox{90}{~~~~~\color{green}{[Flow]}}}
\end{picture} & 
\includegraphics[width=0.22\textwidth]{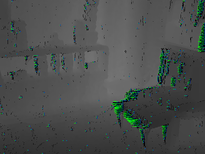} &
\includegraphics[width=0.22\textwidth]{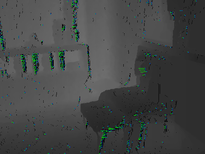} &

\begin{picture}(1,25)
  \put(0,15){\rotatebox{90}{~~~~~\color{green}{[Flow]}}}
\end{picture} & 
\includegraphics[width=0.22\textwidth]{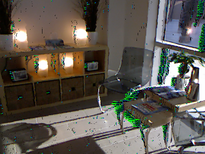} &
\includegraphics[width=0.22\textwidth]{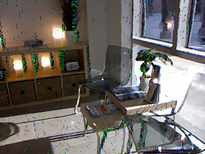} \\

\end{tabular} 
\begin{tabular}{l@{\hspace{0.35cm}}c@{\hspace{0.35cm}}c@{\hspace{0.45cm}}@{\hspace{0.45cm}}l@{\hspace{0.35cm}}c@{\hspace{0.35cm}}c}

&  (e) Translation $0.5m$ (left) & (f) Translation $0.5m$ (right)  & & (g) Translation $0.5m$ (left) & (h) Translation $0.5m$ (right) \\ 
& ~~~~~Orientation $10^\circ$ & ~~~~~Orientation $10^\circ$ & & ~~~~~Orientation $10^\circ$  &  ~~~~~Orientation $10^\circ$\\[1em] 

\end{tabular}
\caption{(a) Qualitative results of depth completion methods at different viewing location and orientation. 
The same network used for RGB image completion to obtain a new RGB image at the target pose. A complete video 
is shown in the supplementary material.
\vspace{2em}
}
\label{fig:resultsrgb}
\end{figure*}

\begin{figure*}
\centering \scriptsize  
\begin{tabular}{c@{\hspace{0.05cm}}c@{\hspace{0.05cm}}c@{\hspace{0.05cm}}c}

\includegraphics[width=0.22\textwidth]{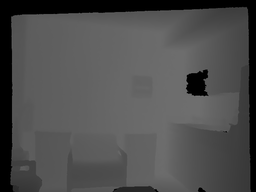} & 
\includegraphics[width=0.22\textwidth]{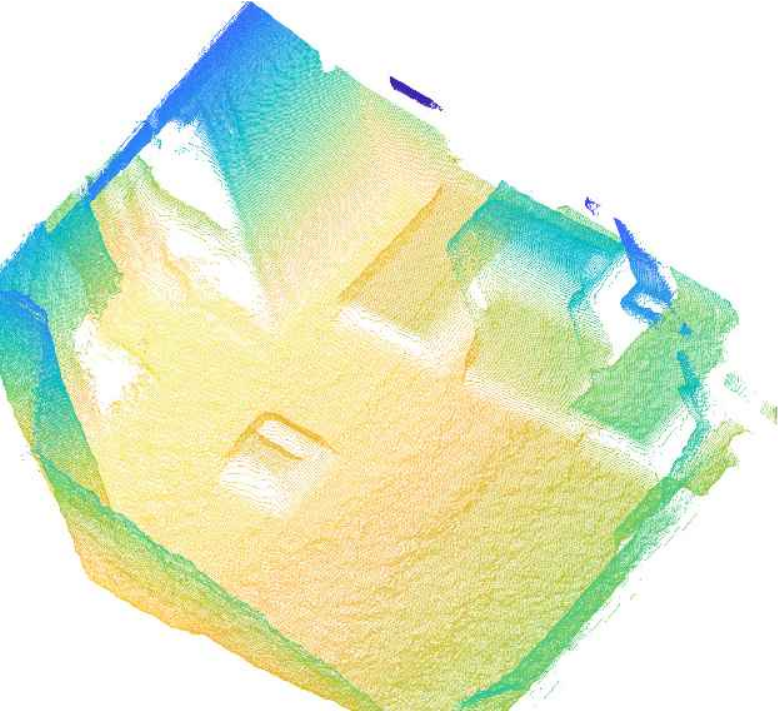} & 
\includegraphics[width=0.22\textwidth]{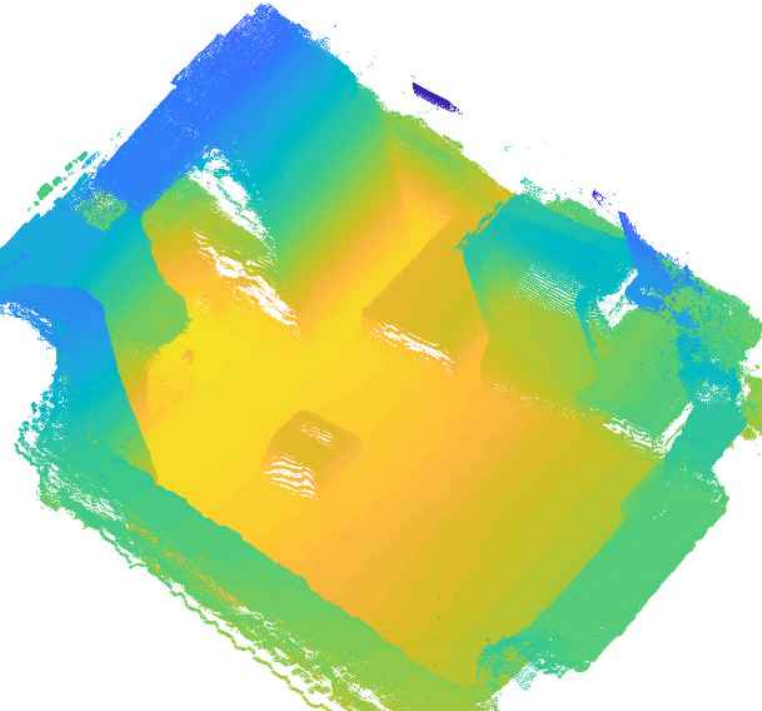} & 
\includegraphics[width=0.22\textwidth]{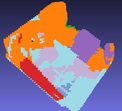} \\

\includegraphics[width=0.22\textwidth]{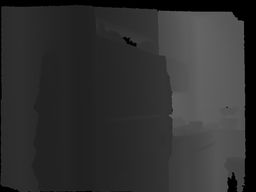} & 
\includegraphics[width=0.22\textwidth]{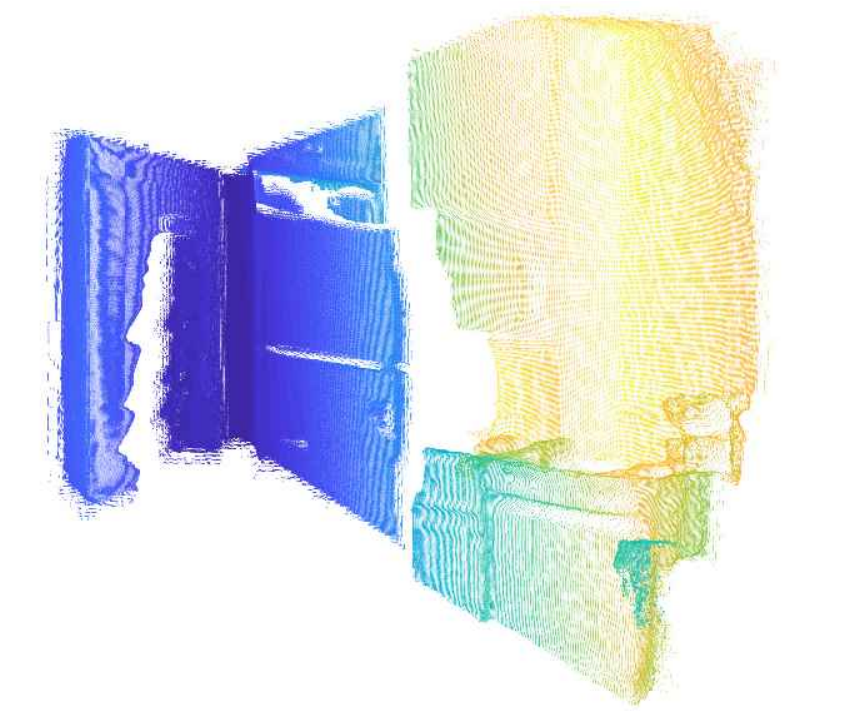} & 
\includegraphics[width=0.22\textwidth]{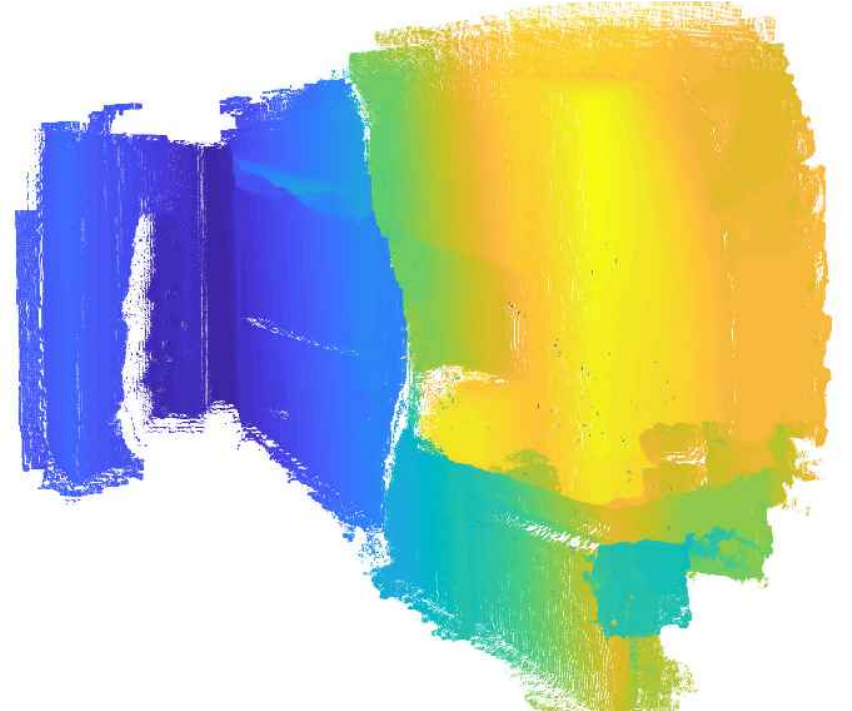} & 
\includegraphics[width=0.22\textwidth]{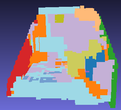} \\

\includegraphics[width=0.22\textwidth]{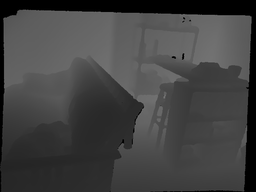} & 
\includegraphics[width=0.22\textwidth]{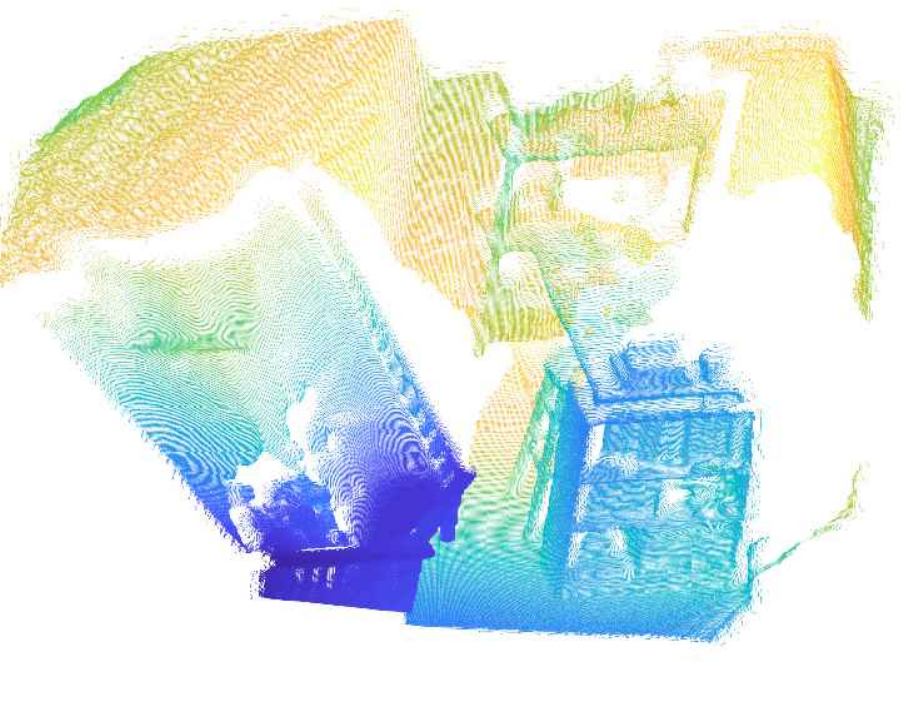} & 
\includegraphics[width=0.22\textwidth]{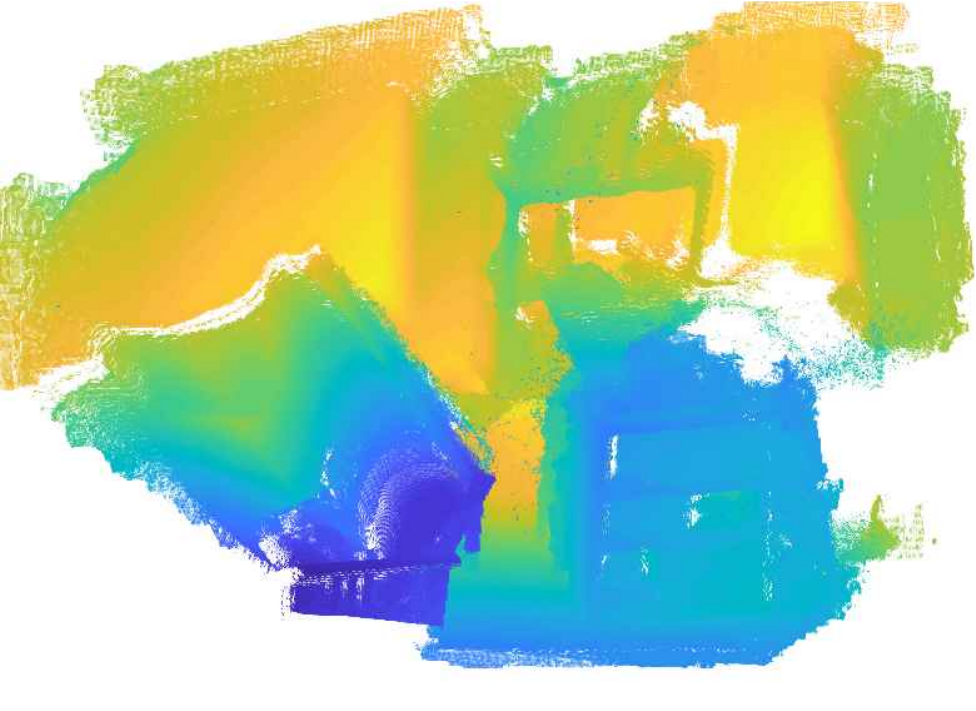} & 
\includegraphics[width=0.22\textwidth]{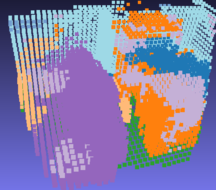} \\

\includegraphics[width=0.22\textwidth]{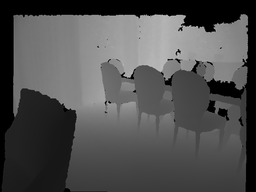} & 
\includegraphics[width=0.22\textwidth]{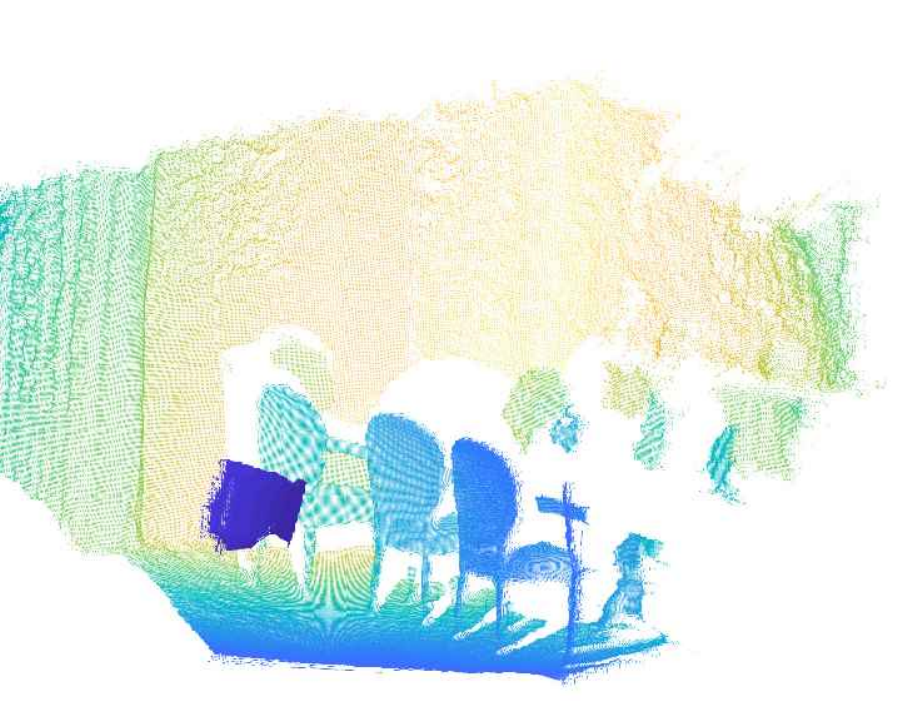} & 
\includegraphics[width=0.22\textwidth]{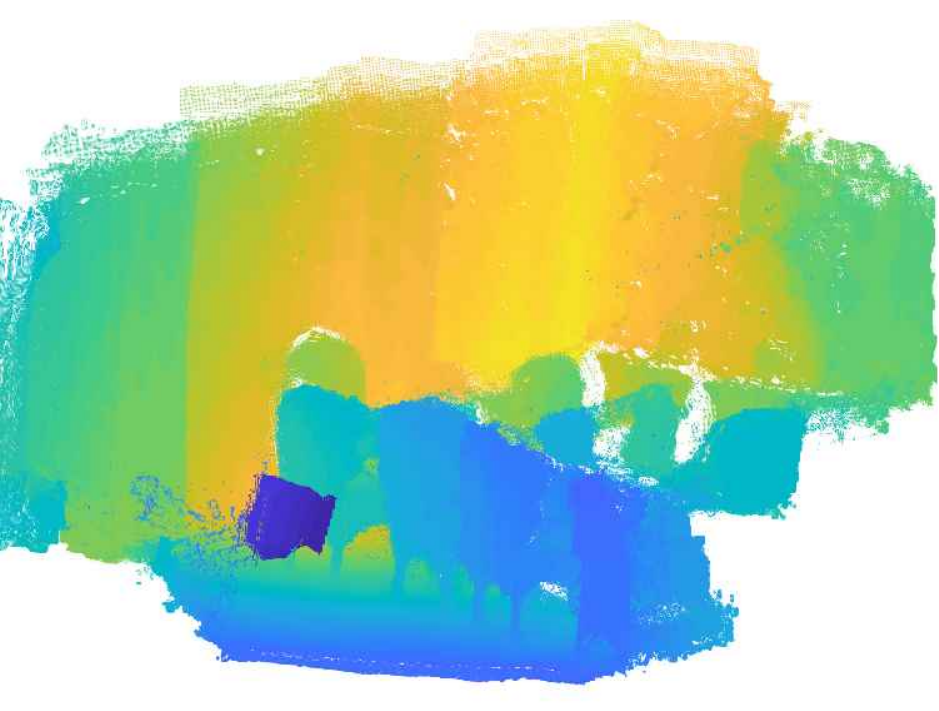} & 
\includegraphics[width=0.22\textwidth]{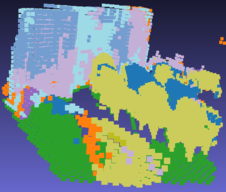} \\

\includegraphics[width=0.22\textwidth]{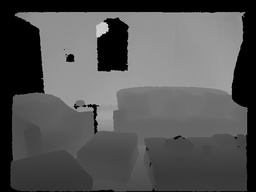} & 
\includegraphics[width=0.22\textwidth]{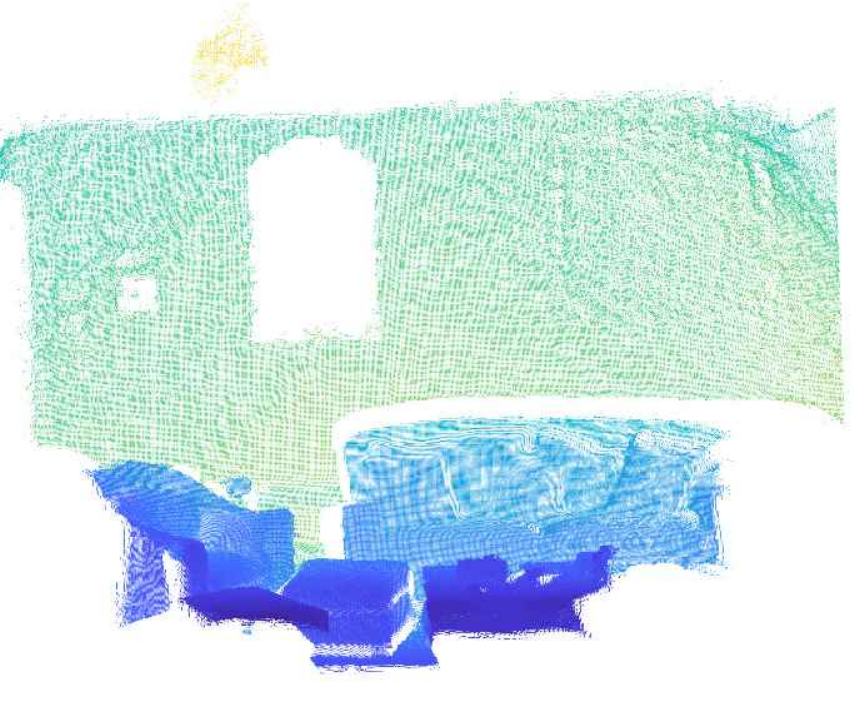} & 
\includegraphics[width=0.22\textwidth]{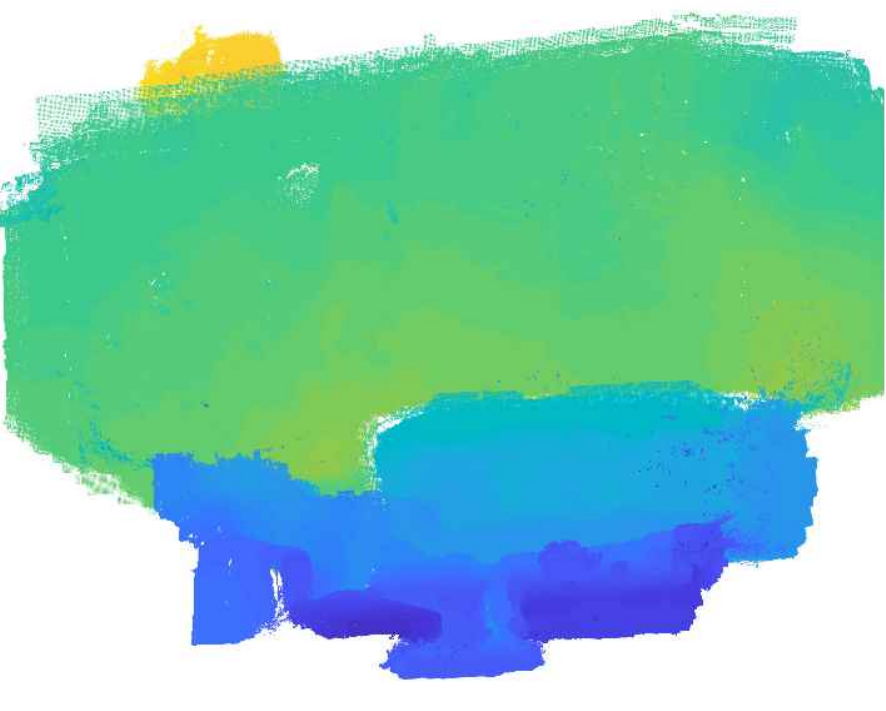} & 
\includegraphics[width=0.22\textwidth]{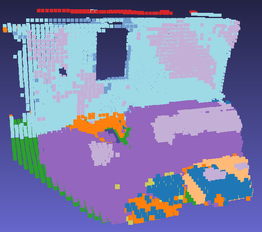} \\

\includegraphics[width=0.22\textwidth]{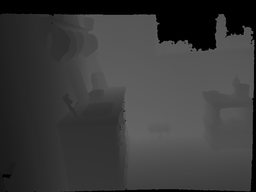} & 
\includegraphics[width=0.22\textwidth]{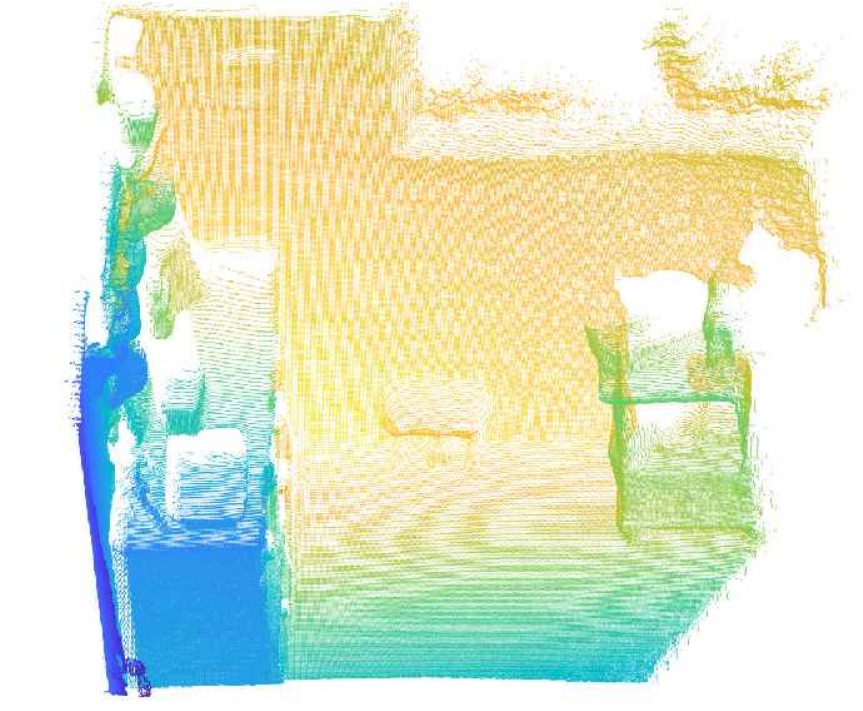} & 
\includegraphics[width=0.22\textwidth]{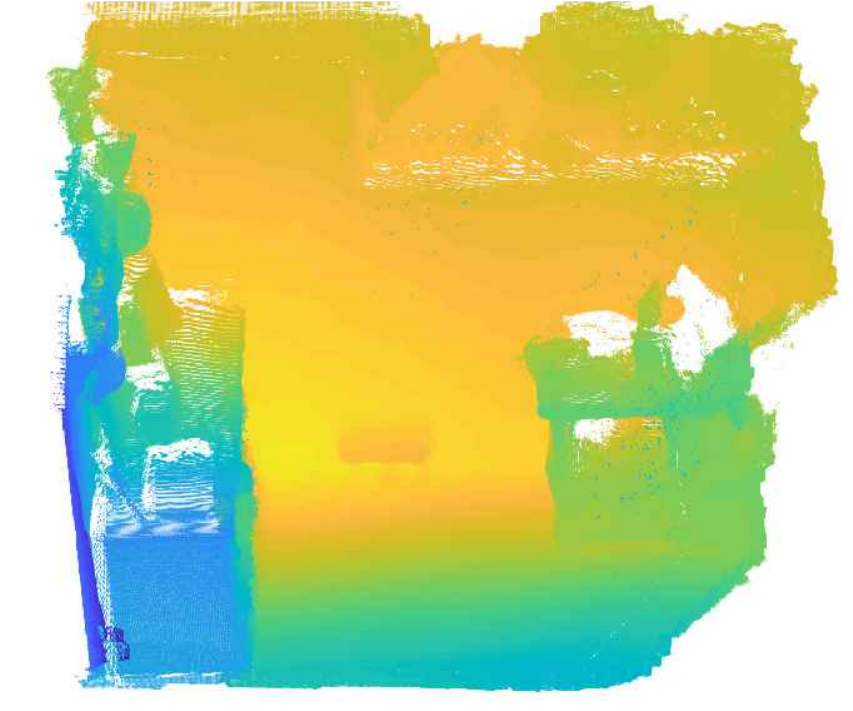} & 
\includegraphics[width=0.22\textwidth]{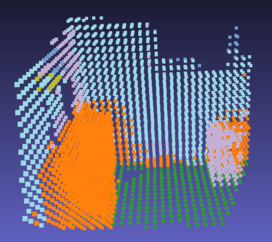} \\

(a) Input Depth & (b) Projected in 3D & (c) Proposed completion & (d) Completion by SSC-Net~\cite{song2017semantic} 
\end{tabular}
\caption{(a) Qualitative results of depth completion methods at different viewing location and orientation on NYU~\cite{Silberman:ECCV12} datasets. Images are first warped to the target pose and then use proposed depth completion method to predict depth at the occluded regions and subsequently merged.}
\label{fig:results10}
\end{figure*}

\begin{figure*}
\centering \scriptsize  
\begin{tabular}{c@{\hspace{0.05cm}}c@{\hspace{0.05cm}}c@{\hspace{0.05cm}}c}

\includegraphics[width=0.22\textwidth]{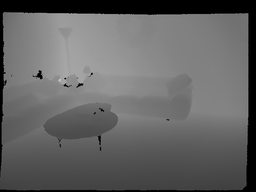} & 
\includegraphics[width=0.22\textwidth]{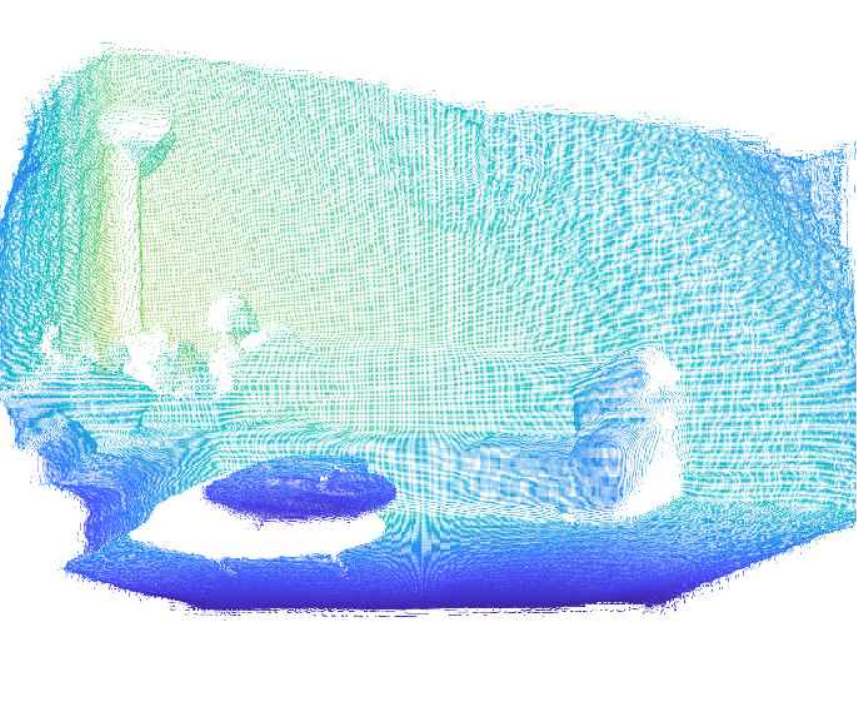} & 
\includegraphics[width=0.22\textwidth]{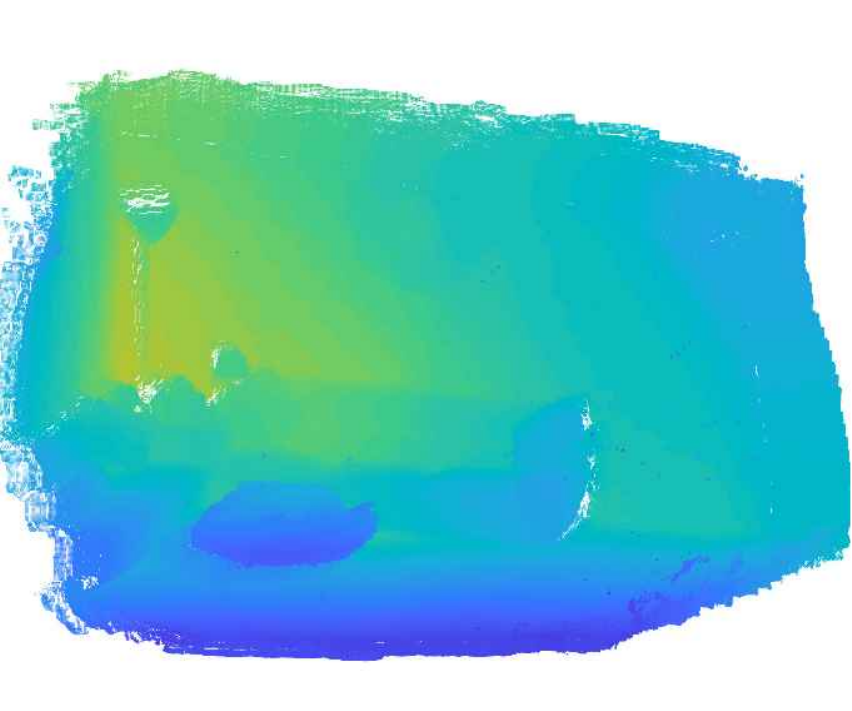} & 
\includegraphics[width=0.22\textwidth]{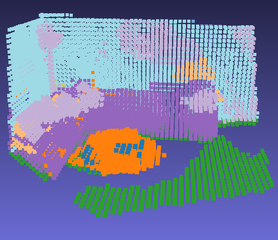} \\

\includegraphics[width=0.22\textwidth]{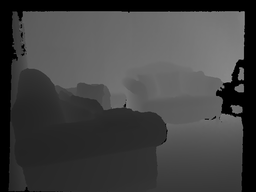} & 
\includegraphics[width=0.22\textwidth]{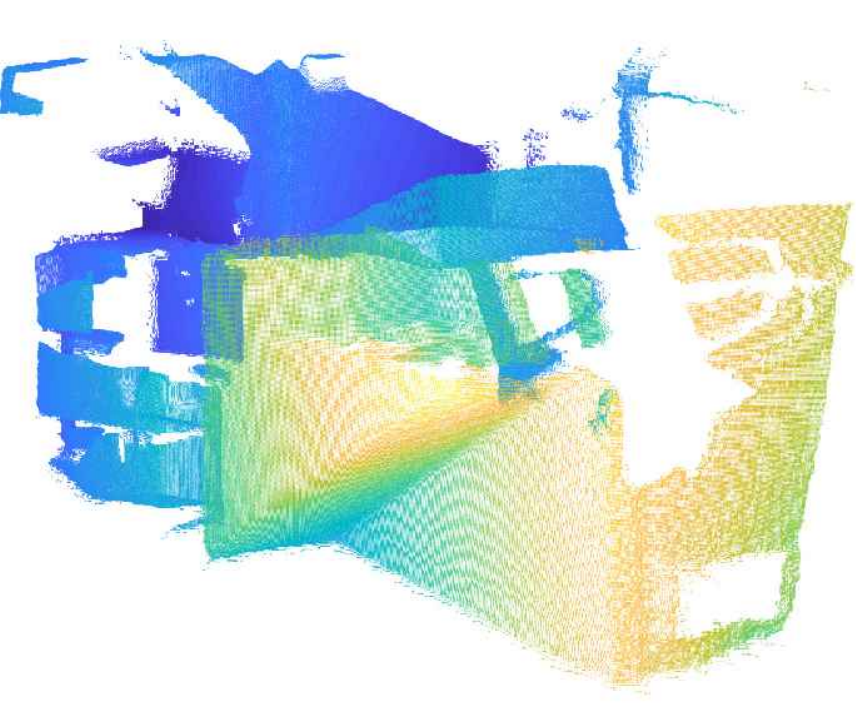} & 
\includegraphics[width=0.22\textwidth]{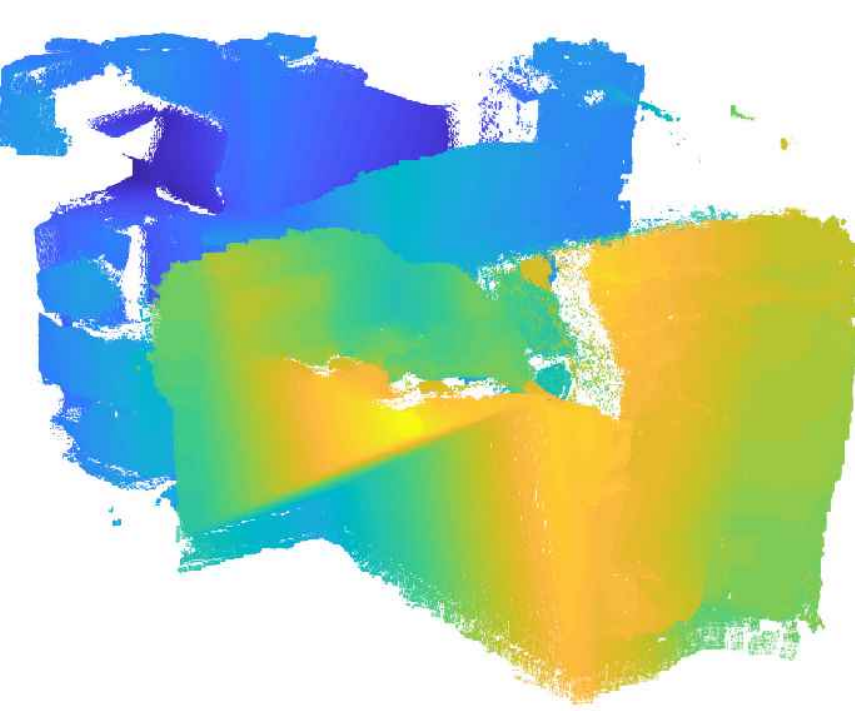} & 
\includegraphics[width=0.22\textwidth]{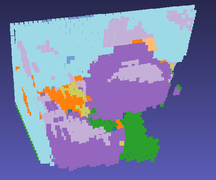} \\

\includegraphics[width=0.22\textwidth]{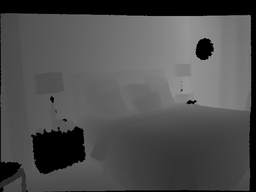} & 
\includegraphics[width=0.22\textwidth]{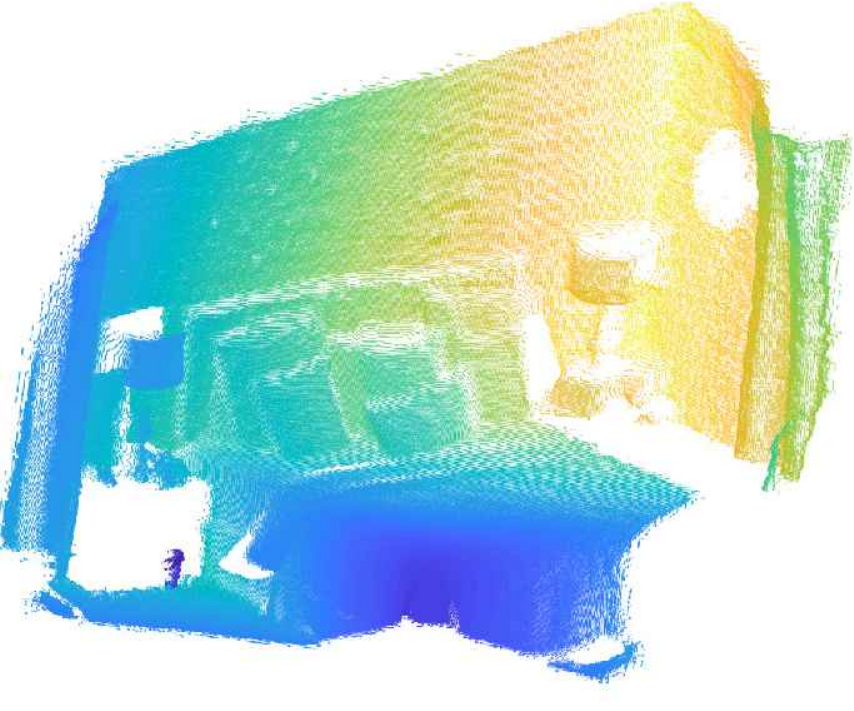} & 
\includegraphics[width=0.22\textwidth]{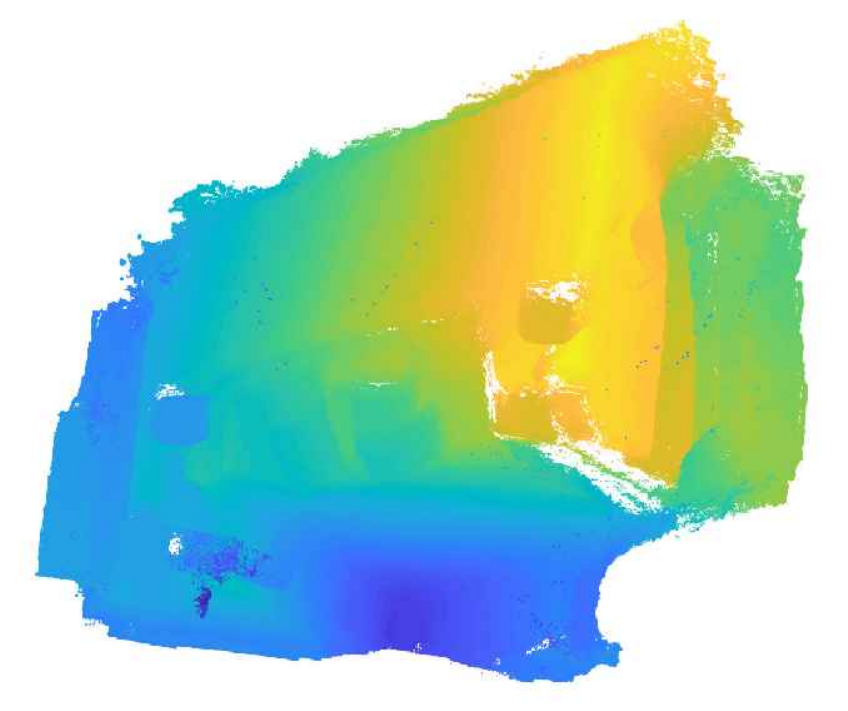} & 
\includegraphics[width=0.22\textwidth]{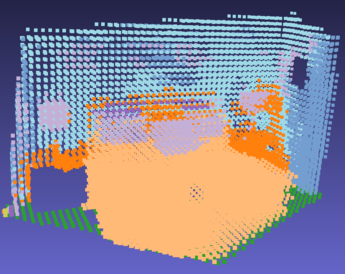} \\

\includegraphics[width=0.22\textwidth]{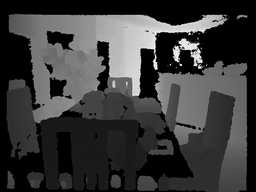} & 
\includegraphics[width=0.22\textwidth]{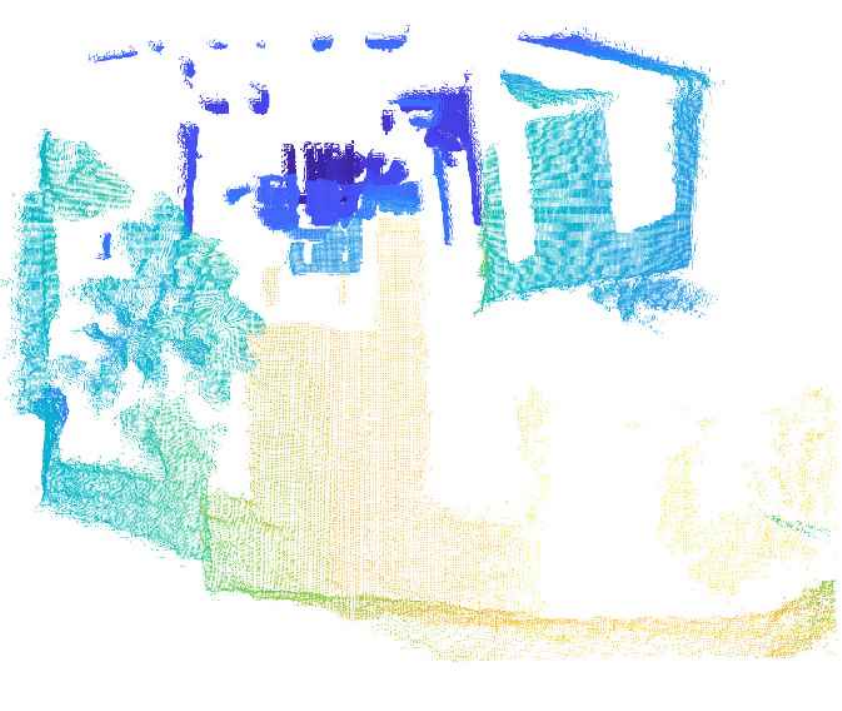} & 
\includegraphics[width=0.22\textwidth]{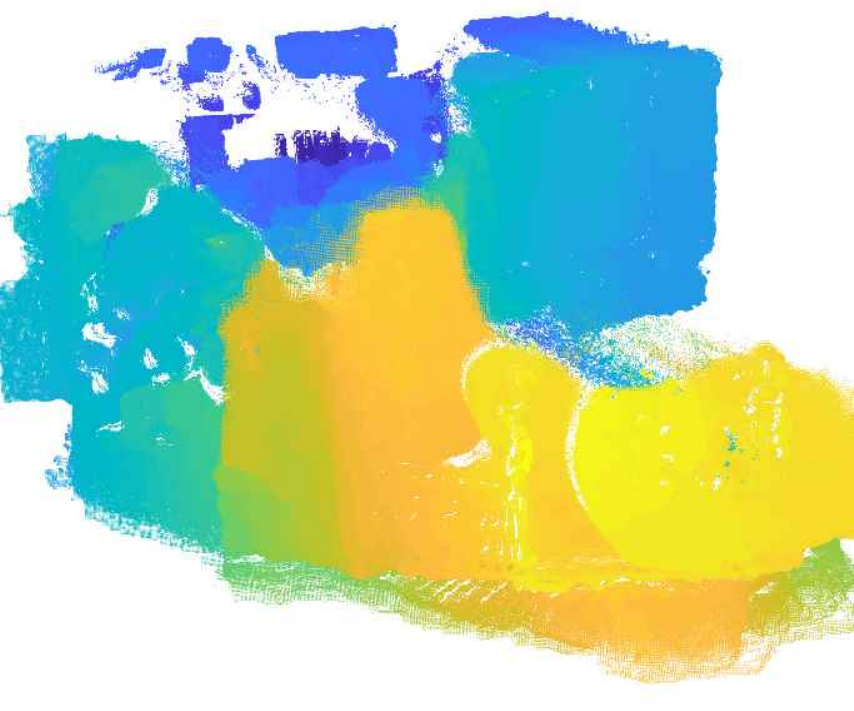} & 
\includegraphics[width=0.22\textwidth]{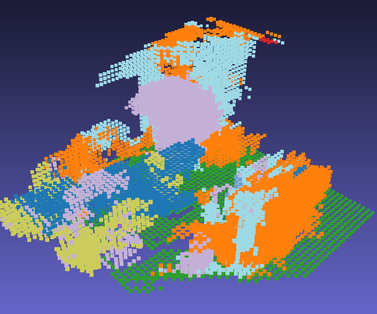} \\

\includegraphics[width=0.22\textwidth]{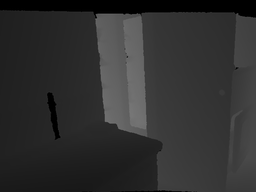} & 
\includegraphics[width=0.22\textwidth]{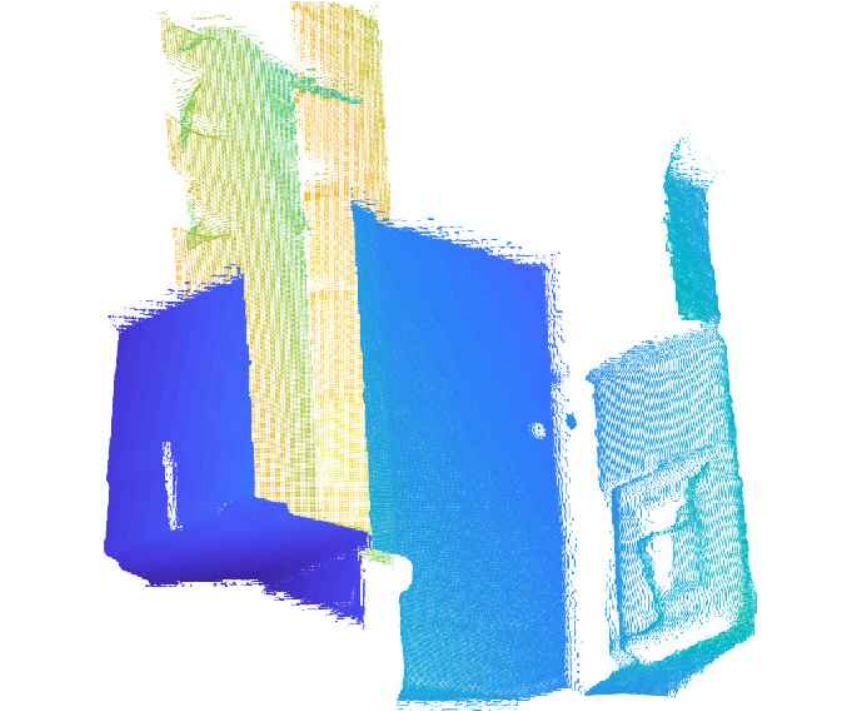} & 
\includegraphics[width=0.22\textwidth]{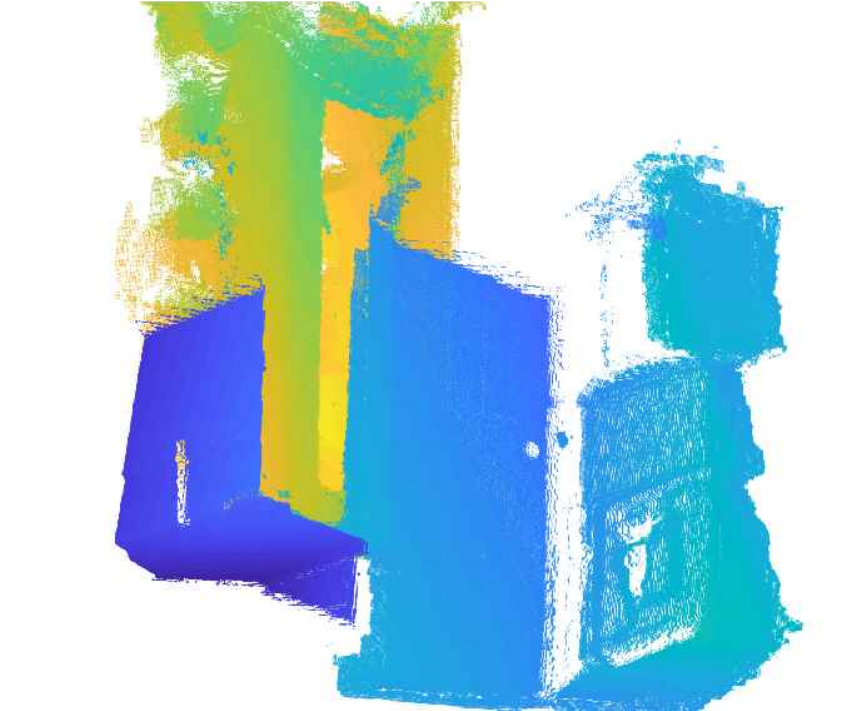} & 
\includegraphics[width=0.22\textwidth]{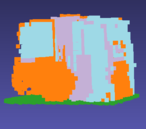} \\

\includegraphics[width=0.22\textwidth]{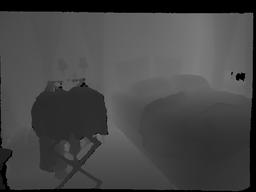} & 
\includegraphics[width=0.22\textwidth]{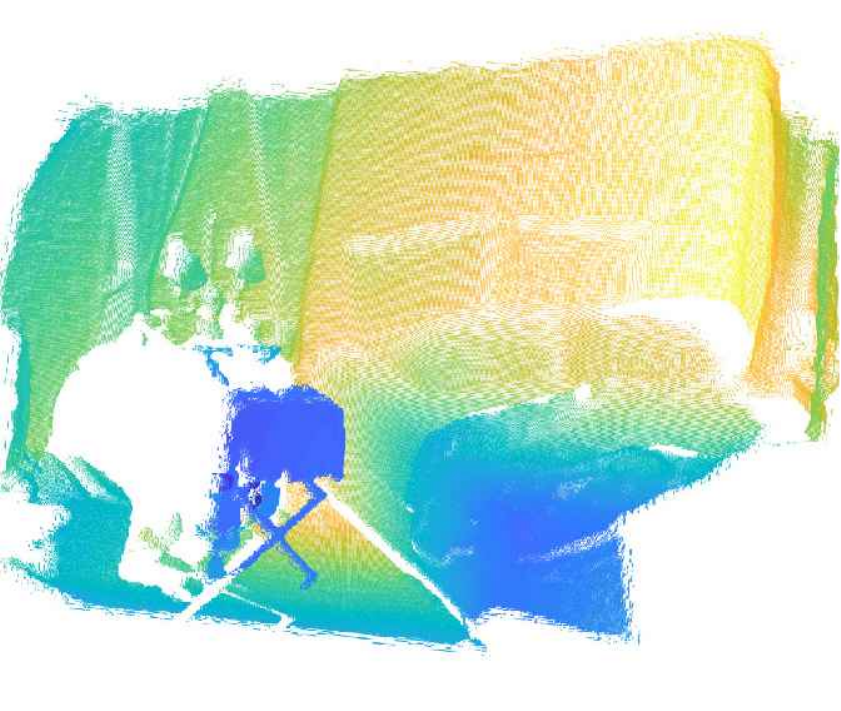} & 
\includegraphics[width=0.22\textwidth]{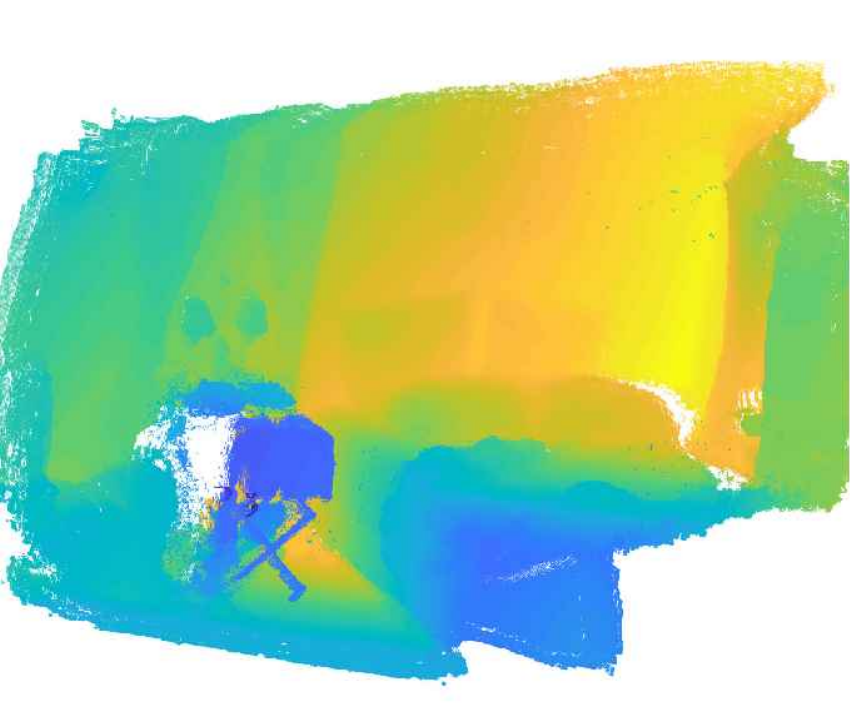} & 
\includegraphics[width=0.22\textwidth]{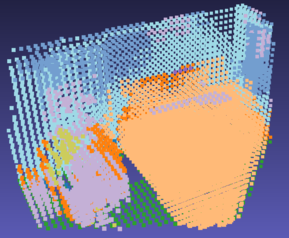} \\

(a) Input Depth & (b) Projected in 3D & (c) Proposed completion & (d) Completion by SSC-Net~\cite{song2017semantic} 
\end{tabular}
\caption{(a) Qualitative results of depth completion methods at different viewing location and orientation on NYU~\cite{Silberman:ECCV12} datasets. Images are first warped to the target pose and then use proposed depth completion method to predict depth at the occluded regions and subsequently merged.} 
\label{fig:results11}
\end{figure*}

\bibliographystyle{plainnat}
\bibliography{egbib}

\end{document}